\newtheorem{theorem}{Theorem}[section]
\newtheorem{lemma}[theorem]{Lemma}
\newtheorem{definition}[theorem]{Definition}
\newtheorem{remark}[theorem]{Remark}
\newtheorem{assumption}[theorem]{Assumption}
\newtheorem{question}[theorem]{Question}
\def\E{\mathbb{E}}
\def\P{\mathbb{P}}
\def\Var{\mathrm{Var}}
\def\e{E_{\epsilon,\beta}}
\def\N{\widetilde{N}^k_h}
\def\p{\widetilde{P}^k_h}
\def\hp{\widehat{P}^k_h}
\begin{document}

\title{Differentially Private Reinforcement Learning with Self-Play}
\author[1]{Dan Qiao}
\author[1]{Yu-Xiang Wang}
\affil[1]{Department of Computer Science, UC Santa Barbara}
\affil[ ]{\texttt{danqiao@ucsb.edu}, \;
	\texttt{yuxiangw@cs.ucsb.edu}}

\date{}

\maketitle

\begin{abstract}
  We study the problem of multi-agent reinforcement learning (multi-agent RL) with differential privacy (DP) constraints. This is well-motivated by various real-world applications involving sensitive data, where it is critical to protect users' private information. We first extend the definitions of Joint DP (JDP) and Local DP (LDP) to two-player zero-sum episodic Markov Games, where both definitions ensure trajectory-wise privacy protection. Then we design a provably efficient algorithm based on optimistic Nash value iteration and privatization of Bernstein-type bonuses. The algorithm is able to satisfy JDP and LDP requirements when instantiated with appropriate privacy mechanisms. Furthermore, for both notions of DP, our regret bound generalizes the best known result under the single-agent RL case, while our regret could also reduce to the best known result for multi-agent RL without privacy constraints. To the best of our knowledge, these are the first line of results towards understanding trajectory-wise privacy protection in multi-agent RL.  
\end{abstract}

\newpage
\tableofcontents
\newpage


\section{Introduction}\label{sec:intro}

This paper considers the problem of multi-agent reinforcement learning (multi-agent RL), wherein several agents simultaneously make decisions in an unfamiliar environment with the goal of maximizing their individual cumulative rewards. Multi-agent RL has been deployed not only in large-scale strategy games like Go \citep{silver2017mastering}, Poker \citep{brown2019superhuman} and MOBA games \citep{ye2020towards}, but also in various real-world applications such as autonomous driving \citep{shalev2016safe}, negotiation \citep{bachrach2020negotiating}, and trading in financial markets \citep{shavandi2022multi}. In these applications, the learning agent analyzes users’ private feedback in order to refine its performance, where the data from users usually contain sensitive information. Take autonomous driving as an instance, here a trajectory describes the interaction between the cars in a neighborhood during a fixed time window. At each timestamp, given the current situation of each car, the system (central agent) will send a command for each car to take (\emph{e.g.} speed up, pull over), and finally the system gathers the feedback from each car (\emph{e.g.} whether the driving is safe, whether the customer feels comfortable) and enhances its policy. Here, (situation, command, feedback) corresponds to (state, action, reward) in a Markov Game where the state and reward of each user are considered as sensitive information. Therefore, leakage of such information is not acceptable. Regrettably, it has been demonstrated that without the implementation of privacy safeguards, learning agents tend to inadvertently memorize details from individual training data points \citep{carlini2019secret}, regardless of their relevance to the learning process \citep{brown2021memorization}. This susceptibility exposes multi-agent RL agents to potential privacy threats.

To handle the above privacy issue, Differential privacy (DP) \citep{dwork2006calibrating} has been widely considered. The output of a differentially private reinforcement learning algorithm cannot be discerned from its output in an alternative reality where any specific user is substituted, which effectively mitigates the privacy risks mentioned earlier. However, it is shown \citep{shariff2018differentially} that standard DP will lead to linear regret even under contextual bandits. Therefore, \citet{vietri2020private} considered a relaxed surrogate of DP: \emph{Joint Differential Privacy} (JDP) \citep{kearns2014mechanism} for RL. Briefly speaking, JDP protects the information about any specific user even given the output of all other users. Meanwhile, another variant of DP: \emph{Local Differential Privacy} (LDP) \citep{duchi2013local} has also been extended to RL by \citet{garcelon2021local} due to its stronger privacy protection. LDP requires that the raw data of each user is privatized before being sent to the agent. 
Although following works \citep{chowdhury2021differentially,qiao2023near} established near optimal results under these two notions of DP, all of the previous works focused on the single-agent RL setting while the solution to multi-agent RL with differential privacy is still unknown. Therefore it is natural to question:

\begin{table*}[!t]
\centering
\resizebox{\linewidth}{!}{
\begin{tabular}{ |c|c|c|c| } 
\hline
Algorithms for Markov Games & Regret without privacy & Regret under $\epsilon$-JDP  & Regret under $\epsilon$-LDP \\
\hline 
\textcolor{blue}{DP-Nash-VI (Our Algorithm \ref{alg:main})} & \textcolor{blue}{$\widetilde{O}(\sqrt{H^2SABT})$}   & \textcolor{blue}{$\widetilde{O}(\sqrt{H^2SABT}+H^3S^2AB/\epsilon)$} & \textcolor{blue}{$\widetilde{O}(\sqrt{H^2SABT}+S^2AB\sqrt{H^5T}/\epsilon)$} \\ 
Nash VI \citep{liu2021sharp} & $\widetilde{O}(\sqrt{H^2SABT})^*$ & N.A.  & N.A. \\
\hline
Lower bounds & $\Omega(\sqrt{H^2S(A+B)T})$ \citep{bai2020provable} & \textcolor{blue}{$\widetilde{\Omega}\left(\sqrt{H^2S(A+B)T}+\frac{HS(A+B)}{\epsilon}\right)$} & \textcolor{blue}{$\widetilde{\Omega}\left(\sqrt{H^2S(A+B)T}+\frac{\sqrt{HS(A+B)T}}{\epsilon}\right)$} \\
\hline
Algorithms for MDPs ($B=1$) & Regret without privacy & Regret under $\epsilon$-JDP  & Regret under $\epsilon$-LDP  \\
\hline 
PUCB \citep{vietri2020private} & $\widetilde{O}(\sqrt{H^3S^2AT})$ & $\widetilde{O}(\sqrt{H^3S^2AT}+H^3S^2A/\epsilon)^\star$   & N.A.\\ 
LDP-OBI \citep{garcelon2021local} & $\widetilde{O}(\sqrt{H^3S^2AT})$ & N.A. & $\widetilde{O}(\sqrt{H^3S^2AT}+S^2A\sqrt{H^5T}/\epsilon)^\dagger$ \\ 
Private-UCB-VI \citep{chowdhury2021differentially}  & $\widetilde{O}(\sqrt{H^3SAT})$ & $\widetilde{O}(\sqrt{H^3SAT}+H^3S^2A/\epsilon)$  & $\widetilde{O}(\sqrt{H^3SAT}+S^2A\sqrt{H^5T}/\epsilon)$ \\
DP-UCBVI $^\ddagger$ \citep{qiao2023near} & $\widetilde{O}(\sqrt{H^2SAT})$ & $\widetilde{O}(\sqrt{H^2SAT}+H^3S^2A/\epsilon)$ & $\widetilde{O}(\sqrt{H^2SAT}+S^2A\sqrt{H^5T}/\epsilon)$ \\
\hline
\end{tabular}
}
\caption{Comparison of our results (in \textcolor{blue}{blue}) to existing work regarding regret without privacy (\emph{i.e.} the privacy budget is infinity), regret under $\epsilon$-Joint DP and regret under $\epsilon$-Local DP. In the above, $S$ is the number of states, $A,B$ are the number of actions for both players, $H$ is the planning horizon and $K$ is the number of episodes ($T=HK$ is the number of steps). Markov decision processes (MDPs) is a special case of Markov Games where $B=1$. $*$: This result is the best known regret bound when there is no privacy concern. $\star$: More discussions about this bound can be found in \citet{chowdhury2021differentially}. $\dagger$: The original regret bound in \citet{garcelon2021local} is derived under the setting of stationary MDP, and can be directly transferred to the bound here by adding $\sqrt{H}$ to the first term. $\ddagger$: This algorithm achieved the best known results under single-agent MDPs, and our Algorithm \ref{alg:main} can obtain the same regret bounds under this setting.}\label{tab:comparison}
\end{table*}

\begin{question}
Is it possible to design a provably efficient self-play algorithm to solve Markov games while satisfying the constraints of differential privacy? 
\end{question}

\noindent\textbf{Our contributions.} In this paper, we answer the above question affirmatively by proposing a general algorithm for DP multi-agent RL: DP-Nash-VI (Algorithm \ref{alg:main}). Our contributions are threefold and summarized as below. 

\begin{itemize}

\item We first extend the definitions of Joint DP (Definition \ref{def:jdp}) and Local DP (Definition \ref{def:ldp}) to the multi-agent RL setting. Both notions of DP focus on protecting the sensitive information of each trajectory, which is consistent with the counterparts under single-agent RL.

\item We design a new algorithm DP-Nash-VI (Algorithm~\ref{alg:main}) based on optimistic Nash value iteration and privatization of Bernstein-type bonuses. The algorithm can be combined with any Privatizer (for JDP or LDP) that possesses a corresponding regret bound (Theorem \ref{thm:main}). Moreover, when there is no privacy constraint (\emph{i.e.} the privacy budget is infinity), our regret reduces to the best known regret for non-private multi-agent RL.      

\item Under the constraint of $\epsilon$-JDP, DP-Nash-VI achieves a regret of $\widetilde{O}(\sqrt{H^2SABT}+H^3S^2AB/\epsilon)$ (Theorem \ref{thm:central}). Compared to the regret lower bound (Theorem \ref{thm:lowerjdp}), the main term is nearly optimal while the additional cost due to JDP has optimal dependence on $\epsilon$.  Under the $\epsilon$-LDP constraint, DP-Nash-VI achieves a regret of $\widetilde{O}(\sqrt{H^2SABT}+S^2AB\sqrt{H^5T}/\epsilon)$ (Theorem \ref{thm:local}), where the dependence on $K,\epsilon$ is optimal according to the lower bound (Theorem \ref{thm:lowerldp}). The pair of results strictly generalizes the best known results for single-agent RL with DP \citep{qiao2023near}. 
\end{itemize}

\subsection{Related work}

We compare our results with existing works on differentially private reinforcement learning \citep{vietri2020private,garcelon2021local,chowdhury2021differentially,qiao2023near} and regret minimization under Markov Games \citep{liu2021sharp} in Table \ref{tab:comparison}, while more discussions about differentially private learning algorithms are deferred to Appendix \ref{sec:erw}. Notably, all existing DP RL algorithms focus on the single-agent case. In comparison, our algorithm works for the more general two-player setting and our results directly match the best known regret bounds \citep{qiao2023near} when applied to the single-agent setting. 

Recently, several works provide non-asymptotic theoretical guarantees for learning Markov Games. \citet{bai2020provable} developed the first provably-efficient algorithms in MGs based on optimistic value iteration, and the result is improved by \citet{liu2021sharp} using model-based approach. Meanwhile, model-free approaches are shown to break the curse of multiagency and improve the dependence on action space \citep{bai2020near,jin2021v,mao2022improving,wang2023breaking,cui2023breaking}. However, all these algorithms base on the original data from users, and thus are vulnerable to various privacy attacks. While several works \citep{hossain2023hiding,hossain2023brnes,zhao2023dpmac,gohari2023privacy} study the privatization of communications between multiple agents, none of them provide regret guarantees. In comparison, we design algorithms that provably protect the sensitive information in each trajectory, while achieving near-optimal regret bounds simultaneously.

Technically speaking, we follow the idea of optimistic Nash value iteration and privatization of Bernstein-type bonuses. Optimistic Nash value iteration aims to construct both upper bounds and lower bounds for value functions, which could guide the exploration. Such idea has been applied by previous model-based approaches \citep{bai2020provable,xie2020learning,liu2021sharp} to derive tight regret bounds. To satisfy the privacy guarantees, we are required to construct the UCB and LCB privately. In this work, we privatize the transition kernel estimate and construct a private bonus function for our purpose. Among different bonuses, we generalize the approach in \citet{qiao2023near} and directly operate on the Bernstein-type bonus, which could enable tight regret analysis while the privatization is more technically demanding due to the variance term. To handle this, we first privatize the visitation counts such that they satisfy several nice properties, then we use these counts to construct private transition estimates and private bonuses. Lastly, we manage to prove UCB and LCB, and bound the private terms by their non-private counterparts to complete the regret analysis.


\section{Problem Setup}\label{sec:setup}

In this paper, we consider reinforcement learning under Markov Games (MGs) \citep{shapley1953stochastic} with the guarantee of Differential Privacy (DP) \citep{dwork2006calibrating}. Below we introduce MGs and define DP under multi-agent RL. 

\subsection{Markov Games and Regret}
Markov Games (MGs) are the generalization of Markov Decision Processes (MDPs) to the multi-player setting, where each player aims to maximize her own reward. We consider \emph{two-player zero-sum} episodic MGs, denoted by a tuple $\mathcal{MG}=(\mathcal{S}, \mathcal{A}, \mathcal{B}, H, \{P_h\}_{h=1}^H, \{r_h\}_{h=1}^H,s_1)$, where $\mathcal{S}$ is the state space with $S=|\mathcal{S}|$, $\mathcal{A}$ and $\mathcal{B}$ are the action space for the max-player (who aims to maximize the total reward) and the min-player (who aims to minimize the total reward) respectively with $A=|\mathcal{A}|,B=|\mathcal{B}|$. Besides, $H$ is the horizon while the non-stationary transition kernel $P_{h}(\cdot|s,a,b)$ gives the distribution of the next state if action $(a,b)$ is taken at state $s$ and time step $h$. In addition, we assume that the reward function $r_h(s,a,b)\in [0,1]$ is deterministic and known\footnote{This assumption is without loss of generality since the uncertainty of reward is dominated by that of transition kernel.}. For simplicity, we assume each episode starts from a fixed initial state $s_{1}$. Then at each time step $h\in[H]$, two players observe $s_h$ and choose their actions $a_h\in\mathcal{A}$ and $b_h\in\mathcal{B}$ simultaneously, after which both players observe the action of their opponent and receive reward $r_h(s_h,a_h,b_h)$, the environment will transit to $s_{h+1}\sim P_h(\cdot|s_h,a_h,b_h)$. 

\noindent\textbf{Markov policy, value function.} A Markov policy $\mu$ of the max-player can be seen as a series of mappings $\mu=\{\mu_h\}_{h=1}^H$, where each $\mu_h$ maps each state $s \in \mathcal{S}$ to a probability distribution over actions $\mathcal{A}$, \emph{i.e.} $\mu_h: \mathcal{S}\rightarrow \Delta(\mathcal{A})$. A Markov policy $\nu$ for the min-player is defined similarly. Given a pair of policies $(\mu,\nu)$ and time step $h\in[H]$, the value function $V^{\mu,\nu}_h(\cdot)$ is defined as $V^{\mu,\nu}_h(s)=\mathbb{E}_{\mu,\nu}[\sum_{t=h}^H r_{t}|s_h=s]$ while the Q-value function $Q^{\mu,\nu}_h(\cdot,\cdot,\cdot)$ is defined as
$Q^{\mu,\nu}_h(s,a,b)=\mathbb{E}_{\mu,\nu}[\sum_{t=h}^H  r_{t}|s_h,a_h,b_h=s,a,b]$ for all $s,a,b$.    
According to the definitions, the following Bellman equation holds:
\begin{align*}
Q^{\mu,\nu}_h(s,a,b)=[r_{h}+P_{h}V^{\mu,\nu}_{h+1}](s,a,b),\\
V^{\mu,\nu}_h(s)=[\mathbb{E}_{\mu,\nu}Q^{\mu,\nu}_h](s),\;\;\forall\;(h,s,a,b).
\end{align*}

\noindent\textbf{Best responses, Nash equilibrium.} For any policy $\mu$ of the max-player, there exists a best response policy $\nu^\dagger(\mu)$ of the min-player such that $V_h^{\mu,\nu^\dagger(\mu)}(s)=\inf_\nu V^{\mu,\nu}_h(s)$ for all $(s,h)$. For simplicity, we denote $V^{\mu,\dagger}_h:=V_h^{\mu,\nu^\dagger(\mu)}$. Also, $\mu^\dagger(\nu)$ and $V_h^{\dagger,\nu}$ can be defined by symmetry. It is shown \citep{filar2012competitive} that there exists a pair of policies $(\mu^\star,\nu^\star)$ that are best responses against each other, \emph{i.e.}
$$V_h^{\mu^\star,\dagger}(s)=V_h^{\mu^\star,\nu^\star}(s)=V_h^{\dagger,\nu^\star}(s),\;\;\forall\;(s,h)\in\mathcal{S}\times[H].$$
The pair of policies $(\mu^\star,\nu^\star)$ is called the Nash equilibrium of the Markov game, which further satisfies the following minimax property: for all $(s,h)\in\mathcal{S}\times[H]$,
$$\sup_\mu\inf_\nu V_h^{\mu,\nu}(s)=V_h^{\mu^\star,\nu^\star}(s)=\inf_\nu\sup_\mu V_h^{\mu,\nu}(s).$$
The value functions of $(\mu^\star,\nu^\star)$ are called Nash value functions and we denote $V^\star_h=V_h^{\mu^\star,\nu^\star},Q^\star_h=Q_h^{\mu^\star,\nu^\star}$for simplicity. Intuitively speaking, Nash equilibrium means that no player could gain more from updating her own policy.

\noindent\textbf{Learning objective: regret.} Following previous works \citep{bai2020provable,liu2021sharp}, we aim to minimize the regret, which is defined as below: 
$$\text{Regret}(K)=\sum_{k=1}^K \left[V_1^{\dagger,\nu^k}(s_1)-V_1^{\mu^k,\dagger}(s_1)\right],$$
where $K$ is the number of episodes the agent interacts with the environment and $(\mu^k,\nu^k)$ are the policies executed by the agent in the $k$-th episode. Note that any sub-linear regret bound can be transferred to a PAC guarantee according to the standard online-to-batch conversion \citep{jin2018q}.

\subsection{Differential Privacy in Multi-agent RL}

For RL with self-play, each trajectory corresponds to the interaction between a pair of users and the environment. The interaction generally follows the protocol below. At time step $h$ of the $k$-th episode, the users send their state $s_h^k$ to a central agent $\mathcal{M}$, then $\mathcal{M}$ sends back a pair of actions $(a_h^k,b_h^k)$ for the users to take, and finally the users send their reward $r_h^k$ to $\mathcal{M}$. Following previous works \citep{vietri2020private,chowdhury2021differentially,qiao2023near}, here we let $\mathcal{U}=(u_1,\cdots,u_K)$ denote the sequence of $K$ unique \footnote{Uniqueness is assumed wlog, as for a returning user pair one can group them with their previous occurrences.} pairs of users who participate in the above RL protocol. Besides, each pair of users $u_k$ is characterized by the $\{s_h^k,r_h^k\}_{h=1}^H$ information they would respond to all $(AB)^H$\footnote{At each time step $h\in[H]$, the agent suggests actions to both players, and thus there are $AB$ possibilities for each time step $h$.} possible sequences of actions from the agent. Let $\mathcal{M}(\mathcal{U})=\{(a_h^k,b_h^k)\}_{h,k=1,1}^{H,K}$ denote the whole sequence of actions suggested by the agent $\mathcal{M}$. Then a direct adaptation of differential privacy \citep{dwork2006calibrating} is defined below, which says that $\mathcal{M}(\mathcal{U})$ and all other pairs excluding $u_k$ together will not disclose  much information about user $u_k$.

\begin{definition}[Differential Privacy (DP)]\label{def:dp}
For any $\epsilon>0$ and $\delta\in[0,1]$, a mechanism $\mathcal{M}:\mathcal{U}\rightarrow (\mathcal{A}\times\mathcal{B})^{KH}$ is $(\epsilon,\delta)$-differentially private if for any possible user sequences $\mathcal{U}$ and $\mathcal{U}^{\prime}$ that is different on one pair of users and any subset $E$ of $(\mathcal{A}\times\mathcal{B})^{KH}$,
\begin{equation*}
    \P[\mathcal{M}(\mathcal{U})\in E]\leq e^\epsilon\cdot \P[\mathcal{M}(\mathcal{U}^\prime) \in E]+\delta.
\end{equation*}
If $\delta=0$, we say that $\mathcal{M}$ is $\epsilon$-differentially private ($\epsilon$-DP).
\end{definition}

Unfortunately, privately recommending actions to the pair of users $u_k$ while protecting their own state and reward information is shown to be impractical even for the single-player setting. Therefore, we consider a relaxed version of DP, known as \emph{Joint Differential Privacy} (JDP) \citep{kearns2014mechanism}. JDP says that for all pairs of users $u_k$, the recommendation to all other pairs excluding $u_k$ will not disclose the sensitive information about $u_k$. Although being weaker than DP, JDP could still provide meaningful privacy protection by ensuring that even if an adversary can observe the interactions between all other users and the environment, it is statistically hard to reconstruct the interaction between $u_k$ and the environment. JDP is first studied by \citet{vietri2020private} under single-agent reinforcement learning, and we extend the definition to the two-player setting.

\begin{definition}[Joint Differential Privacy (JDP)]\label{def:jdp}
For any $\epsilon>0$, a mechanism $\mathcal{M}:\mathcal{U}\rightarrow (\mathcal{A}\times\mathcal{B})^{KH}$ is $\epsilon$-joint differentially private if for any $k\in[K]$, any user sequences $\mathcal{U}$ and $\mathcal{U}^\prime$ that is different on the $k$-th pair of users and any subset $E$ of $(\mathcal{A}\times\mathcal{B})^{(K-1)H}$,
\begin{equation*}
    \P[\mathcal{M}_{-k}(\mathcal{U})\in E]\leq e^\epsilon\cdot \P[\mathcal{M}_{-k}(\mathcal{U}^\prime) \in E],
\end{equation*}
where $\mathcal{M}_{-k}(\mathcal{U})\in E$ means the sequence of actions sent to all pairs of users excluding $u_k$ belongs to set $E$.
\end{definition}

In the example of autonomous driving, JDP ensures that even if an adversary observes the interactions between cars within all time windows except one, it is hard to know what happens during the specific time window. While providing strong privacy protection, JDP requires the central agent $\mathcal{M}$ to have access to the real trajectories from users. However, in various scenarios the users are not even willing to directly share their data with the agent. To address such circumstances, \citet{duchi2013local} developed a stronger notion of privacy named \emph{Local Differential Privacy} (LDP). Now that when considering LDP, the agent can not observe the state of users, we consider the following protocol specific for LDP: at the beginning of the $k$-th episode, the agent $\mathcal{M}$ first sends a policy pair $\pi_k=(\mu_k,\nu_k)$ to the pair of users $u_k$, after running $\pi_k$ and getting a trajectory $X_k$, $u_k$ privatizes their trajectory to $X_k^\prime$ and sends it back to $\mathcal{M}$. We present the definition of Local DP below, which generalizes the LDP under single-agent reinforcement learning by \citet{garcelon2021local}. Briefly speaking, Local DP ensures that it is impractical for an adversary to reconstruct the whole trajectory of $u_k$ even if observing their whole response.

\begin{definition}[Local Differential Privacy (LDP)]\label{def:ldp}
For any $\epsilon>0$, a mechanism $\widetilde{\mathcal{M}}$ is $\epsilon$-local differentially private if for any possible trajectories $X,X^\prime$ and any possible set\\ $E\subseteq\{\widetilde{\mathcal{M}}(X)|X\;\text{is any possible trajectory}\}$,
\begin{equation*}
    \P[\widetilde{\mathcal{M}}(X)\in E]\leq e^\epsilon \cdot \P[\widetilde{\mathcal{M}}(X^\prime) \in E].
\end{equation*}
\end{definition}

In the example of autonomous driving, LDP ensures that the system can only observe a private version of the interactions between cars instead of the raw data.

\begin{remark}
Note that here our definitions of JDP and LDP both provide trajectory-wise privacy protection, which is consistent with previous works \citep{chowdhury2021differentially,qiao2023near}. Moreover, under the special case where the min-player plays a fixed and known deterministic policy (or equivalently, $\mathcal{B}$ only contains a single action and $B=1$), the Markov Game setting reduces to a single-agent Markov decision process while our JDP and LDP directly matches previous definitions for the MDP setting. Therefore, our setting strictly generalizes previous works and requires novel techniques to handle the min-player.  
\end{remark}

\begin{remark}
    In the following sections we will show that LDP is consistent with sub-linear regret bounds, while it is known that we can not derive sub-linear regret bounds under the constraint of DP. We remark that there is no contradictory since here the RL protocols for DP and LDP are different. As a result, here a guarantee of LDP does not directly imply a guarantee of DP and the two notions are indeed not directly comparable.   
\end{remark}


\section{Algorithm}\label{sec:alg}

\begin{algorithm*}
	\caption{Differentially Private Optimistic Nash Value Iteration (DP-Nash-VI)}\label{alg:main}
	\begin{algorithmic}[1]
		\STATE \textbf{Input}: Number of episodes $K$, privacy budget $\epsilon$, failure probability $\beta$ and a Privatizer (can be either Central or Local).
		\STATE \textbf{Initialize}: Private counts $\widetilde{N}^1_h(s,a,b)=\widetilde{N}^1_h(s,a,b,s^\prime)=0$ for all $(h,s,a,b,s^\prime)$. Set up the confidence bound $E_{\epsilon,\beta}$ w.r.t the Privatizer, the minimal gap $\Delta=H$ and universal constants $C_1,C_2>0$. $\iota=\log(30HSABK/\beta)$. 
		\FOR{$k=1,2,\cdots,K$}  
		\STATE $\overline{V}^k_{H+1}(\cdot)=\underline{V}^k_{H+1}(\cdot)=0$.
		\FOR{$h=H,H-1,\cdots,1$}
            \FOR{$(s,a,b)\in\mathcal{S}\times\mathcal{A}\times\mathcal{B}$}
		\STATE Compute private transition kernel $\widetilde{P}^k_h(\cdot|s,a,b)$ as in \eqref{equ:privateest}.
            \STATE Compute $\gamma_h^k(s,a,b)=\frac{C_1}{H}\cdot \widetilde{P}^k_h(\overline{V}_{h+1}^k-\underline{V}_{h+1}^k)(s,a,b)$.
		\STATE Compute private bonus $\Gamma^k_h(s,a,b)=C_2\sqrt{\frac{\Var_{\widetilde{P}_h^k(\cdot|s,a,b)}\left[\left(\frac{\overline{V}^k_{h+1}+\underline{V}^k_{h+1}}{2}\right)(\cdot)\right]\cdot\iota}{\widetilde{N}^k_h(s,a,b)}}+\frac{C_2HSE_{\epsilon,\beta}\cdot\iota}{\widetilde{N}^k_h(s,a,b)}+\frac{C_2H^2 S\iota}{\widetilde{N}^k_h(s,a,b)}$.
		\STATE UCB $\overline{Q}^k_h(s,a,b)=\min\{r_h(s,a,b)+\sum_{s^\prime}\widetilde{P}_h^k(s^\prime|s,a,b)\cdot\overline{V}^k_{h+1}(s^\prime)+\gamma_h^k(s,a,b)+\Gamma^k_h(s,a,b),H\}$.
            \STATE LCB $\underline{Q}^k_h(s,a,b)=\max\{r_h(s,a,b)+\sum_{s^\prime}\widetilde{P}_h^k(s^\prime|s,a,b)\cdot\underline{V}^k_{h+1}(s^\prime)-\gamma_h^k(s,a,b)-\Gamma^k_h(s,a,b),0\}$.
		\ENDFOR
		\FOR{$s\in\mathcal{S}$}
		\STATE Compute the policy $\pi_h^k(\cdot,\cdot|s)=\text{CCE}(\overline{Q}^k_h(s,\cdot,\cdot),\underline{Q}^k_h(s,\cdot,\cdot))$.
            \STATE Compute the value functions $\overline{V}^k_h(s)=\E_{\pi_h^k}\overline{Q}^k_h(s),\;\;\underline{V}^k_h(s)=\E_{\pi_h^k}\underline{Q}^k_h(s)$.
		\ENDFOR
		\ENDFOR
        \STATE Deploy policy $\pi^k=(\pi^k_1,\cdots,\pi^k_H)$ and get trajectory $(s_1^k,a_1^k,b_1^k,r_1^k,\cdots,s_{H+1}^k)$.
		\STATE Update the private counts to $\widetilde{N}^{k+1}$ via Privatizer.
        \IF{$(\overline{V}_1^k-\underline{V}_1^k)(s_1)<\Delta$}
        \STATE $\Delta=(\overline{V}_1^k-\underline{V}_1^k)(s_1)$ and $\pi^{\text{out}}=\pi^k=(\pi^k_1,\cdots,\pi^k_H)$.
        \ENDIF
		\ENDFOR
        \STATE \textbf{Return}: The marginal policies of $\pi^{\text{out}}$: $(\mu^{\text{out}},\nu^{\text{out}})$.
	\end{algorithmic}
\end{algorithm*}

In this part, we introduce DP-Nash-VI (Algorithm \ref{alg:main}). Note that the algorithm takes Privatizer as an input. We analyze the regret of Algorithm \ref{alg:main} for all Privatizers satisfying the Assumption \ref{assump} below, which includes the cases where the Privatizer is chosen as Central (for JDP) or Local (for LDP).

We first introduce the definition of visitation counts, where $N^k_h(s,a,b)=\sum_{i=1}^{k-1}\mathds{1}(s_h^i,a_h^i,b_h^i=s,a,b)$ denotes the visitation count of $(s,a,b)$ at time step $h$ until the beginning of the $k$-th episode. Similarly, we let $N^k_h(s,a,b,s^\prime)=\sum_{i=1}^{k-1}\mathds{1}(s_h^i,a_h^i,b_h^i,s_{h+1}^i=s,a,b,s^\prime)$ be the visitation count of $(h,s,a,b,s^\prime)$ before the $k$-th episode. In multi-agent RL without privacy constraints, such visitation counts are sufficient for estimating the transition kernel $\{P_h\}_{h=1}^H$ and updating the exploration policy, as in previous model-based approaches \citep{liu2021sharp}. However, these counts base on the original trajectories from the users, which could reveal sensitive information. Therefore, with the concern of privacy, we can only incorporate these counts after a privacy-preserving step. In other words, we use a Privatizer to transfer the original counts to the private version $\widetilde{N}^k_h(s,a,b),\widetilde{N}^k_h(s,a,b,s^\prime).$ We make the following Assumption \ref{assump} for Privatizer, which says that the private counts are close to real ones. Privatizers for JDP and LDP that satisfy Assumption \ref{assump} will be proposed in Section \ref{sec:private}.

\begin{assumption}[Private counts]\label{assump}
For any privacy budget $\epsilon>0$ and failure probability $\beta\in[0,1]$, there exists some $E_{\epsilon,\beta}>0$ such that with probability at least $1-\beta/3$, for all $(h,s,a,b,s^\prime,k)\in[H]\times\mathcal{S}\times\mathcal{A}\times\mathcal{B}\times\mathcal{S}\times[K]$, the $\widetilde{N}^k_h(s,a,b,s^\prime)$ and $\widetilde{N}^k_h(s,a,b)$ from Privatizer satisfies: \\
(1) $|\widetilde{N}^k_h(s,a,b,s^\prime)-N^k_h(s,a,b,s^\prime)|\leq E_{\epsilon,\beta}$, $|\widetilde{N}^k_h(s,a,b)-N^k_h(s,a,b)|\leq E_{\epsilon,\beta}$. $\widetilde{N}^k_h(s,a,b,s^\prime)>0$.\\
(2) $\widetilde{N}^k_h(s,a,b)=\sum_{s^\prime\in\mathcal{S}}\widetilde{N}^k_h(s,a,b,s^\prime)\geq N^k_h(s,a,b)$. 
\end{assumption}

Given the private counts satisfying Assumption \ref{assump}, the private estimate of transition kernel is defined as below.
\begin{equation}\label{equ:privateest}
\widetilde{P}^k_h(s^\prime|s,a,b)=\frac{\widetilde{N}^k_h(s,a,b,s^\prime)}{\widetilde{N}^k_h(s,a,b)},\;\;\forall\;(h,s,a,b,s^\prime,k).
\end{equation}

\begin{remark}
Assumption \ref{assump} is a generalization of Assumption 3.1 of \citet{qiao2023near} to the two-player setting. The assumption (2) guarantees that the private transition kernel estimate $\widetilde{P}^k_h(\cdot| s,a,b)$ is a valid probability distribution, which enables our usage of Bernstein-type bonus. Besides, $\widetilde{P}$ is close to the empirical transition kernel based on original visitation counts according to Assumption (1).
\end{remark}

\noindent\textbf{Algorithmic design.} Following previous non-private approaches \citep{liu2021sharp}, DP-Nash-VI (Algorithm \ref{alg:main}) maintains a pair of value functions $\overline{Q}$ and $\underline{Q}$ which are the upper bound and lower bound of the Q value of the current policy when facing best responses (with high probability). More specifically, we use private visitation counts $\widetilde{N}_h^k$ to construct a private estimate of transition kernel $\widetilde{P}_h^k$ (line 7) and a pair of private bonus $\gamma_h^k$ (line 8) and $\Gamma_h^k$ (line 9). Intuitively, the first term of $\Gamma_h^k$ is derived from Bernstein's inequality while the second term is the additional bonus due to differential privacy. Next we do value iteration with bonuses to construct the UCB function $\overline{Q}_h^k$ (line 10) and the LCB function $\underline{Q}_h^k$ (line 11). The policy $\pi^k$ for the $k$-th episode is calculated using the CCE function (discussed below) and we run $\pi^k$ to collect a trajectory (line 14,18). Finally, the Privatizer transfers the non-private counts to private ones for the next episode (line 19). The output policy $\pi^{\text{out}}$ is chosen as the policy $\pi^k$ with minimal gap $(\overline{V}_1^k-\underline{V}_1^k)(s_1)$ (line 21). Decomposing the output policy, the output policy $(\mu^{\text{out}},\nu^{\text{out}})$ for both players are the marginal policies of $\pi^{\text{out}}$, \emph{i.e.} $\mu^{\text{out}}_h(\cdot|s)=\sum_{b\in\mathcal{B}}\pi_h^{\text{out}}(\cdot,b|s)$ and $\nu^{\text{out}}_h(\cdot|s)=\sum_{a\in\mathcal{A}}\pi_h^{\text{out}}(a,\cdot|s)$ for all $(h,s)\in[H]\times\mathcal{S}$.

\noindent\textbf{Coarse Correlated Equilibrium (CCE).}  Intuitively speaking, CCE of a Markov Game is a potentially correlated policy where no player could benefit from unilateral unconditional deviation. As a computationally friendly relaxation of Nash Equilibrium, CCE has been applied by previous works \citep{xie2020learning,liu2021sharp} to design efficient algorithms. Formally, for any two functions $\overline{Q}(\cdot,\cdot),\underline{Q}(\cdot,\cdot):\mathcal{A}\times\mathcal{B}\rightarrow[0,H]$, $\text{CCE}(\overline{Q},\underline{Q})$ returns a policy $\pi\in\Delta(\mathcal{A}\times\mathcal{B})$ such that
\begin{align*}
    \E_{(a,b)\sim\pi}\overline{Q}(a,b)\geq\max_{a^\prime}\E_{(a,b)\sim\pi}\overline{Q}(a^\prime,b),\\
    \E_{(a,b)\sim\pi}\underline{Q}(a,b)\leq\min_{b^\prime}\E_{(a,b)\sim\pi}\underline{Q}(a,b^\prime).
\end{align*}
Since Nash Equilibrium (NE) is a special case of CCE and a NE always exists, a CCE always exists. Moreover, a CCE can be derived in polynomial time via linear programming. Note that the policies given by CCE can be correlated for the two players, therefore deploying such policy requires the cooperation of both players (line 18).


\section{Main results}\label{sec:result}

We first state the regret analysis of DP-Nash-VI (Algorithm \ref{alg:main}) based on Assumption \ref{assump}, which can be combined with any Privatizers. The proof of Theorem \ref{thm:main} is sketched in Section \ref{sec:ps} with details in the Appendix. Note that $(\mu^k,\nu^k)$ denote the marginal policies of $\pi^k$ for both players.

\begin{theorem}\label{thm:main}
For any privacy budget $\epsilon>0$, failure probability $\beta\in[0,1]$ and any Privatizer satisfying Assumption \ref{assump}, with probability at least $1-\beta$, the regret of DP-Nash-VI (Algorithm \ref{alg:main}) is bounded by
\begin{equation}
\begin{split}
    \mathrm{Regret}(K) = \sum_{k=1}^K \left[V_1^{\dagger,\nu^k}(s_1)-V_1^{\mu^k,\dagger}(s_1)\right]\leq \widetilde{O}\left(\sqrt{H^2SABT}+H^2S^2AB\e\right),
\end{split}
\end{equation}
where $K$ is the number of episodes and $T=HK$.
\end{theorem}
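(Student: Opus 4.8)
The plan is to run the standard optimistic value-iteration regret decomposition, but with every empirical quantity replaced by its privatized counterpart and with the extra count error budget $\e$ tracked throughout. The first and central goal is to establish the optimism/pessimism sandwich: with probability at least $1-\beta$, for every $(k,h,s)$ one has $\underline{V}^k_h(s)\le V^{\mu^k,\dagger}_h(s)\le V^{\dagger,\nu^k}_h(s)\le \overline{V}^k_h(s)$. Granting this, the per-episode regret telescopes into the estimation gap,
$$V_1^{\dagger,\nu^k}(s_1)-V_1^{\mu^k,\dagger}(s_1)\le \overline{V}^k_1(s_1)-\underline{V}^k_1(s_1),$$
so that $\Re(K)\le\sum_{k=1}^K(\overline{V}^k_1-\underline{V}^k_1)(s_1)$ and the whole problem reduces to controlling the accumulated gap.

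For the sandwich I would argue by backward induction on $h$, conditioning on the event of Assumption \ref{assump} and on the concentration of $\hp$ around $P_h$. The key inequality to verify is that the bonus dominates the one-step evaluation error, i.e. $|(\p-P_h)V(s,a,b)|\le \gamma^k_h+\Gamma^k_h$ for the relevant value functions $V$. I would first bound $|(\hp-P_h)V|$ by a Bernstein term $\sqrt{\Var_{P_h}[V]\iota/N^k_h}+H\iota/N^k_h$, then pass from $\hp$ to $\p$ and from $N^k_h$ to $\N$ using Assumption \ref{assump}(1): the count perturbation of size $\e$ produces exactly the additive $HS\e\iota/\N$ term in $\Gamma^k_h$, while the mismatch between $\Var_{P_h}[V]$ and the computable $\Var_{\p}[(\overline{V}^k_{h+1}+\underline{V}^k_{h+1})/2]$ is absorbed into the remaining $H^2S\iota/\N$ term together with the correction $\gamma^k_h=\frac{C_1}{H}\p(\overline{V}^k_{h+1}-\underline{V}^k_{h+1})$. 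This variance-conversion step — replacing the unknown $\Var_{P_h}$ of the true value by the empirical $\Var_{\p}$ of the \emph{midpoint} value and charging the discrepancy to the $(\overline{V}-\underline{V})$ gap — is what lets the Bernstein bonus be built entirely from privatized data, and I expect it to be the main obstacle, since the variance functional is quadratic and does not commute cleanly with the $O(\e)$ count perturbation. Assumption \ref{assump}(2) is used here to guarantee $\p$ is a genuine distribution so that $\Var_{\p}$ is well defined.

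With optimism in hand, I would set $\delta^k_h:=(\overline{V}^k_h-\underline{V}^k_h)(s_h^k)$ and unfold the algorithm's updates along the sampled trajectory. Using $\overline{Q}^k_h-\underline{Q}^k_h=\p(\overline{V}^k_{h+1}-\underline{V}^k_{h+1})+2\gamma^k_h+2\Gamma^k_h$ and $\overline{V}^k_h-\underline{V}^k_h=\E_{\pi^k_h}(\overline{Q}^k_h-\underline{Q}^k_h)$, the gap satisfies a one-step recursion of the form $\delta^k_h\le(1+\tfrac1H)\delta^k_{h+1}+2\gamma^k_h+2\Gamma^k_h+\xi^k_h$, where $\xi^k_h$ is a martingale difference arising from replacing $\E_{\pi^k_h}$ and $\p$ by the realized $(a_h^k,b_h^k,s_{h+1}^k)$. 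The term $\gamma^k_h=\Theta(\tfrac1H)\p(\overline{V}^k_{h+1}-\underline{V}^k_{h+1})$ is precisely what keeps the multiplicative constant at $(1+\tfrac1H)$, so that iterating over $h$ costs only a factor $(1+\tfrac1H)^H\le e$. Summing over $h$ and $k$ and invoking Azuma–Hoeffding for $\sum_{k,h}\xi^k_h=\widetilde{O}(\sqrt{H^2K})$, I am left with $\sum_{k,h}\Gamma^k_h$ as the dominant contribution.

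Finally I would bound $\sum_{k,h}\Gamma^k_h$ termwise. For the two non-variance terms, the counting bound $\sum_{k,h}1/\N(s_h^k,a_h^k,b_h^k)\lesssim HSAB\,\iota$ (a standard pigeonhole argument using $\N\ge N^k_h$ and $\N>0$ from Assumption \ref{assump}(2)) turns $HS\e\iota/\N$ into the stated $\widetilde{O}(H^2S^2AB\,\e)$ private term and $H^2S\iota/\N$ into a lower-order $\widetilde{O}(H^3S^2AB)$ term that is dominated by the other two. For the Bernstein term I would apply Cauchy–Schwarz, $\sum_{k,h}\sqrt{\Var_{\p}[\cdot]\,\iota/\N}\le\sqrt{\sum_{k,h}\Var_{\p}[\cdot]}\cdot\sqrt{\sum_{k,h}\iota/\N}$, convert the empirical $\Var_{\p}$ of the midpoint value back to $\Var_{P_h}[V^{\mu^k,\nu^k}]$ up to gap terms already accounted for, and apply the law of total variance to get $\sum_{k,h}\Var_{P_h}[V^{\mu^k,\nu^k}]\lesssim H^2K=HT$. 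Combined with $\sum_{k,h}\iota/\N\lesssim HSAB\,\iota$, this yields $\widetilde{O}(\sqrt{HT\cdot HSAB})=\widetilde{O}(\sqrt{H^2SABT})$, which together with the private term completes the bound $\widetilde{O}(\sqrt{H^2SABT}+H^2S^2AB\,\e)$.
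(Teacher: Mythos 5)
Your overall plan coincides with the paper's own proof: privatized Bernstein-type bonuses, a backward-induction UCB/LCB argument, a $(1+1/H)$-type recursion for the gap $(\overline{V}^k_h-\underline{V}^k_h)$, and then Azuma--Hoeffding, Cauchy--Schwarz combined with the law of total variance, and pigeonhole counting over $\sum_{k,h}1/\N$, exactly as in Appendix B. However, there is a genuine gap in the step you call ``first and central'': you reduce the sandwich $\underline{V}^k_h\le V^{\mu^k,\dagger}_h\le V^{\dagger,\nu^k}_h\le \overline{V}^k_h$ entirely to the statistical claim that the bonus dominates the one-step error $|(\p-P_h)V|$. That reduction is valid for single-agent MDPs but false in a Markov game: even with the transition kernel known exactly and all bonuses set to zero, $\overline{V}^k_h(s)=\E_{\pi^k_h}\overline{Q}^k_h(s)$ need not dominate the best-response value $V^{\dagger,\nu^k}_h(s)$ for an arbitrary correlated policy $\pi^k_h$. (Take $H=1$ and $r(a,b)=\mathds{1}\{a=b\}$ with $\pi$ a point mass on $(a_1,b_2)$: then $\E_\pi r=0$ while $\max_a\E_{b\sim\nu}r(a,b)=1$.) The missing ingredient is the CCE property of the policy computed in line 14 of Algorithm \ref{alg:main}, namely $\E_{(a,b)\sim\pi^k_h}\overline{Q}^k_h(s,a,b)\ge\max_{a'}\E_{(a,b)\sim\pi^k_h}\overline{Q}^k_h(s,a',b)$, which is what lets the induction pass from Q-functions to V-functions via $\overline{V}^k_h(s)\ge\sup_\mu\E_{\mu,\nu^k_h}\overline{Q}^k_h(s)\ge\sup_\mu\E_{\mu,\nu^k_h}Q^{\dagger,\nu^k}_h(s)=V^{\dagger,\nu^k}_h(s)$, and symmetrically for the LCB. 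This is the one distinctly multi-agent step of the entire proof, and your proposal never invokes it.

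A second, smaller imprecision sits in the same induction: the Bernstein bound $|(\p-P_h)V|\lesssim\sqrt{\Var_{P_h}(V)\iota/\N}+\cdots$ cannot be applied directly to $V=V^{\dagger,\nu^k}_{h+1}$, since that function depends on the data through $\nu^k$. The paper's proof splits $(\p-P_h)V^{\dagger,\nu^k}_{h+1}=(\p-P_h)V^\star_{h+1}+(\p-P_h)\bigl(V^{\dagger,\nu^k}_{h+1}-V^\star_{h+1}\bigr)$, applies concentration only to the deterministic $V^\star_{h+1}$ (Lemma \ref{lem:pv}), and bounds the second term by $\frac{c}{H}P_h(\overline{V}^k_{h+1}-\underline{V}^k_{h+1})$ plus lower-order terms (Lemma \ref{lem:lower}), using the induction hypothesis to ensure $|V^{\dagger,\nu^k}_{h+1}-V^\star_{h+1}|\le\overline{V}^k_{h+1}-\underline{V}^k_{h+1}$; this is what $\gamma^k_h$ is really paying for. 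Your ``charge the discrepancy to the gap'' mechanism is the right idea for the variance conversion, but the same splitting is needed for the mean term as well. Everything downstream of the sandwich in your proposal (the recursion, the martingale terms, and the final summations) matches the paper's argument and goes through once these two points are repaired.
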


Under the special case where the privacy budget $\epsilon\rightarrow\infty$ (\emph{i.e.} there is no privacy concern), plugging $\e=0$ in Theorem \ref{thm:main} will imply a regret bound of $\widetilde{O}(\sqrt{H^2SABT})$. Such result directly matches the best known result for regret minimization without privacy constraints \citep{liu2021sharp} and nearly matches the lower bound of $\Omega(\sqrt{H^2S(A+B)T})$ \citep{bai2020provable}. Furthermore, under the special case of single-agent MDP (where $B=1$), our result reduces to $\mathrm{Regret}(K)\leq \widetilde{O}(\sqrt{H^2SAT}+H^2S^2A\e)$. Such result matches the best known result under the same set of conditions (Theorem 4.1 of \citet{qiao2023near}). Therefore, Theorem \ref{thm:main} is a generalization of the best known results under MARL \citep{liu2021sharp} and Differentially Private (single-agent) RL \citep{qiao2023near} simultaneously. 

\noindent\textbf{PAC guarantee.} Recall that we output a policy $\pi^{\text{out}}$ whose marginal policies are $(\mu^{\text{out}},\nu^{\text{out}})$. We highlight that the output policy for each player is a single Markov policy that is convenient to store and deploy. Moreover, as a corollary of the regret bound, we give a PAC bound for the output policy. 

\begin{theorem}\label{thm:pac}
For any privacy budget $\epsilon>0$, failure probability $\beta\in[0,1]$ and any Privatizer that satisfies Assumption \ref{assump}, if the number of episodes satisfies that\\ $K\geq\widetilde{\Omega}\left(\frac{H^3SAB}{\alpha^2}+\min\left\{K^\prime|\frac{H^2S^2AB\e}{K^\prime}\leq\alpha\right\}\right)$, with probability $1-\beta$, $(\mu^{\mathrm{out}},\nu^{\mathrm{out}})$ is $\alpha$-\textbf{approximate Nash}, i.e.,
\begin{equation}
    V_1^{\dagger,\nu^{\mathrm{out}}}(s_1)-V_1^{\mu^{\mathrm{out}},\dagger}(s_1)\leq \alpha.
\end{equation}
\end{theorem}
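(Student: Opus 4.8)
The plan is to derive Theorem~\ref{thm:pac} as a direct corollary of the regret bound in Theorem~\ref{thm:main} via an online-to-batch style argument, exploiting the specific way $\pi^{\mathrm{out}}$ is selected in Algorithm~\ref{alg:main}. First I would recall the key structural fact: the output policy is chosen (lines 20--22) as the episode $k$ minimizing the gap $(\overline{V}_1^k-\underline{V}_1^k)(s_1)$. The central claim to establish is that this minimal gap controls the approximate-Nash suboptimality, i.e. that with high probability
\begin{equation*}
V_1^{\dagger,\nu^{\mathrm{out}}}(s_1)-V_1^{\mu^{\mathrm{out}},\dagger}(s_1)\leq \min_{k\in[K]}(\overline{V}_1^k-\underline{V}_1^k)(s_1).
\end{equation*}
This requires the UCB/LCB validity that underlies Theorem~\ref{thm:main}, namely that on the good event $\overline{V}_1^k(s_1)\geq V_1^{\dagger,\nu^k}(s_1)$ and $\underline{V}_1^k(s_1)\leq V_1^{\mu^k,\dagger}(s_1)$ for all $k$. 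Given these sandwich inequalities, the per-episode gap dominates the per-episode regret term $V_1^{\dagger,\nu^k}(s_1)-V_1^{\mu^k,\dagger}(s_1)$, so the minimum gap is bounded by the minimum per-episode regret.

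Next I would convert the sum bound into a statement about the minimum. Since the minimum over $k$ of a collection of nonnegative terms is at most their average, I have
\begin{equation*}
\min_{k\in[K]}(\overline{V}_1^k-\underline{V}_1^k)(s_1)\leq \frac{1}{K}\sum_{k=1}^K(\overline{V}_1^k-\underline{V}_1^k)(s_1)\leq \frac{1}{K}\cdot\mathrm{Regret}(K)+\text{(lower-order)},
\end{equation*}
where the regret sum is controlled by Theorem~\ref{thm:main}. Plugging in $\mathrm{Regret}(K)\leq\widetilde{O}(\sqrt{H^2SABT}+H^2S^2AB\,\e)$ and $T=HK$, dividing by $K$ yields an upper bound of the form $\widetilde{O}(\sqrt{H^3SAB/K}+H^2S^2AB\,\e/K)$ on the suboptimality of $(\mu^{\mathrm{out}},\nu^{\mathrm{out}})$. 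To force this below $\alpha$, it suffices that each term is at most $\alpha/2$: the first term demands $K\geq\widetilde{\Omega}(H^3SAB/\alpha^2)$ and the second demands $K$ large enough that $H^2S^2AB\,\e/K\leq\alpha$, which is precisely the stated threshold $K\geq\widetilde{\Omega}\left(\frac{H^3SAB}{\alpha^2}+\min\left\{K^\prime\mid\frac{H^2S^2AB\e}{K^\prime}\leq\alpha\right\}\right)$. The sum-of-thresholds form in the hypothesis is exactly what makes both conditions hold simultaneously.

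The main obstacle is the first step, establishing that the selected minimal gap genuinely upper-bounds the approximate-Nash value. This is where I must invoke the optimism/pessimism guarantees carefully: I need $\overline{V}_1^k$ to be a valid upper bound on $V_1^{\dagger,\nu^k}(s_1)$ (the value the max-player's marginal $\mu^k$ guarantees against a best-responding opponent) and $\underline{V}_1^k$ to lower-bound $V_1^{\mu^k,\dagger}(s_1)$, which are the same UCB/LCB claims proved en route to Theorem~\ref{thm:main}. I would cite those claims rather than reprove them, since they hold on the $1-\beta$ good event; the remaining work is purely the averaging-plus-threshold bookkeeping above. One subtlety to flag is keeping the failure probability at $\beta$ throughout: because Theorem~\ref{thm:main} and the sandwich inequalities hold on the \emph{same} good event, no additional union bound or probability budget is consumed, so the corollary inherits the $1-\beta$ confidence directly.
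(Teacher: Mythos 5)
Your route is the same as the paper's: exploit the argmin selection of $\pi^{\mathrm{out}}$, invoke the UCB/LCB sandwich (Lemma \ref{lem:ucblcb}) to get $V_1^{\dagger,\nu^{\mathrm{out}}}(s_1)-V_1^{\mu^{\mathrm{out}},\dagger}(s_1)\leq(\overline{V}_1^{\overline{k}}-\underline{V}_1^{\overline{k}})(s_1)$, then use min $\leq$ average together with the regret analysis and finish with the threshold bookkeeping, which you carry out correctly. However, one step is stated backwards: you write $\sum_{k=1}^K(\overline{V}_1^k-\underline{V}_1^k)(s_1)\leq \mathrm{Regret}(K)+\text{(lower order)}$, but the sandwich inequalities give exactly the opposite relation --- the regret is a \emph{lower} bound on the gap sum, so the statement of Theorem \ref{thm:main} alone does not control $\sum_k(\overline{V}_1^k-\underline{V}_1^k)(s_1)$, and the gap-sum-minus-regret difference is not known to be lower order. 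What rescues the argument (and what the paper implicitly relies on) is that the \emph{proof} of Theorem \ref{thm:main} bounds the gap sum directly: it telescopes $\sum_k\Delta_1^k=\sum_k(\overline{V}_1^k-\underline{V}_1^k)(s_1)\leq\widetilde{O}\left(\sqrt{H^2SABT}+H^3S^2AB+H^2S^2AB\e\right)$ and only afterwards observes $\mathrm{Regret}(K)\leq\sum_k\Delta_1^k$. So you should cite that intermediate bound on $\sum_k\Delta_1^k$ rather than the regret statement itself; with that substitution, your averaging step, the two per-term thresholds forcing each piece below $\alpha/2$, and your observation that no extra probability budget is consumed (everything holds on the same good event) all go through and coincide with the paper's proof.
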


The proof is deferred to Appendix \ref{sec:pac}. Here the second term of the sample complexity bound\footnote{The presentation here is because the term $\e$ is indeed dependent of the number of episodes $K$.} ensures that the additional cost due to DP is bounded by $O(\alpha)$. The detailed PAC guarantees under the special cases where the Privatizer is either Central or Local will be provided in Section \ref{sec:private}.


\section{Privatizers for JDP and LDP}\label{sec:private}

In this section, we propose Privatizers that provide DP guarantees (JDP or LDP) while satisfying Assumption \ref{assump}. The proofs for this section can be found in Appendix \ref{sec:proofprivate}.

\subsection{Central Privatizer for Joint DP}
Given the number of episodes $K$, the Central Privatizer applies $K$-bounded Binary Mechanism \citep{chan2011private} to privatize all the visitation counter streams $N^k_h(s,a,b)$, $N^k_h(s,a,b,s^\prime)$, thus protecting the information of all single users. Briefly speaking, Binary mechanism takes a stream of partial sums as input and outputs a surrogate stream satisfying differential privacy, while the error for each item scales only logarithmically on the length of the stream\footnote{More details in \citet{chan2011private} and \citet{kairouz2021practical}.}. Here in multi-agent RL, for each $(h,s,a,b)$, the stream $\{N^k_h(s,a,b)=\sum_{i=1}^{k-1}\mathds{1}(s_h^i,a_h^i,b_h^i=s,a,b)\}_{k\in[K]}$ can be considered as the partial sums of $\{\mathds{1}(s_h^i,a_h^i,b_h^i=s,a,b)\}$. Therefore, after observing $\mathds{1}(s_h^k,a_h^k,b_h^k=s,a,b)$ at the end of episode $k$, the Binary Mechanism will output a private version of $\sum_{i=1}^{k}\mathds{1}(s_h^i,a_h^i,b_h^i=s,a,b)$. However, Binary Mechanism alone does not satisfy (2) of Assumption \ref{assump}, and a post-processing step is required. To sum up, we let the Central Privatizer follow the workflow below:

Given the privacy budget for JDP $\epsilon>0$, 
\begin{enumerate}[leftmargin=0cm,itemindent=.5cm,labelwidth=\itemindent,labelsep=0cm,align=left]
    \item[(1)] For all $(h,s,a,b,s^\prime)$, we apply Binary Mechanism (Algorithm 2 in \citet{chan2011private}) with input parameter $\epsilon^\prime=\frac{\epsilon}{2H\log K}$ to privatize all the visitation counter streams $\{N^k_h(s,a,b)\}_{k\in[K]}$ and $\{N^k_h(s,a,b,s^\prime)\}_{k\in[K]}$. We denote the output of Binary Mechanism by $\widehat{N}^k_h$.
    \item[(2)] The private counts $\widetilde{N}^k_h$ are derived through the procedure in Section \ref{sec:post} with\\ $\e=O(\frac{H}{\epsilon}\log(HSABK/\beta)^2)$.
\end{enumerate}

Our Central Privatizer satisfies the privacy guarantee below.

\begin{lemma}\label{lem:central}
For any possible $\epsilon,\beta$, the Central Privatizer satisfies $\epsilon$-JDP and Assumption \ref{assump} with $\e=\widetilde{O}(\frac{H}{\epsilon})$.
\end{lemma}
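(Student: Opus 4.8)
**The plan is to prove Lemma \ref{lem:central} in two parts: first establish the $\epsilon$-JDP guarantee via a composition argument over the Binary Mechanism instances, then verify that the post-processing yields counts satisfying Assumption \ref{assump} with the claimed $E_{\epsilon,\beta}$.**

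For the privacy guarantee, I would first argue about the sensitivity structure. Changing one pair of users $u_k$ alters exactly one trajectory, which affects the indicator streams $\{\mathds{1}(s_h^i,a_h^i,b_h^i=s,a,b)\}$ at only one index $i=k$, and at each time step $h$ this changed trajectory flips the count for at most one triple $(s,a,b)$ and one tuple $(s,a,b,s')$. Thus across all $H$ time steps, the neighboring user sequence perturbs at most $H$ of the $(s,a,b)$-streams and $H$ of the $(s,a,b,s')$-streams, each by a single-coordinate change in the input indicator sequence. Each Binary Mechanism instance with parameter $\epsilon' = \frac{\epsilon}{2H\log K}$ is $\epsilon'$-DP with respect to a single-entry change in its input stream (this is the standard guarantee from \citet{chan2011private}). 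I would then invoke basic (pure-DP) composition: the total privacy loss from the at most $2H$ affected streams is bounded by $2H \cdot \epsilon' = \frac{\epsilon}{\log K}$. The factor $\log K$ slack absorbs the standard accounting, and I would confirm the budget splits so that the composition lands at $\epsilon$. Crucially, since $\widetilde{N}^k_h$ is computed from $\widehat{N}^k_h$ by a data-independent post-processing step (Section \ref{sec:post}) and actions are suggested only using these private counts, the release of all action recommendations to users other than $u_k$ is a post-processing of the privatized streams. By the post-processing immunity of DP and the billboard-lemma-style argument underlying JDP \citep{kearns2014mechanism,vietri2020private}, this upgrades the $\epsilon$-DP guarantee on the released counts to $\epsilon$-JDP on $\mathcal{M}_{-k}$.

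For Assumption \ref{assump}, I would start from the per-query accuracy of the Binary Mechanism: with the chosen $\epsilon'$, each released partial sum $\widehat{N}^k_h$ deviates from the true count by $O(\frac{1}{\epsilon'}\log(1/\beta')\log^{1.5} K) = O(\frac{H\log K}{\epsilon}\cdot \mathrm{polylog})$ with high probability. Taking a union bound over all $(h,s,a,b,s',k)$ tuples (there are $\mathrm{poly}(H,S,A,B,K)$ of them) and setting the per-query failure probability accordingly introduces the $\iota$-type logarithmic factors, yielding a uniform deviation bound of $E_{\epsilon,\beta} = \widetilde{O}(H/\epsilon)$ with probability $1-\beta/3$. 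This gives the accuracy requirement (1) up to constants, matching the stated $\e=O(\frac{H}{\epsilon}\log(HSABK/\beta)^2)$.

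The main obstacle — and the reason the post-processing of Section \ref{sec:post} is invoked rather than using $\widehat{N}^k_h$ directly — is enforcing property (2): the marginal consistency $\widetilde{N}^k_h(s,a,b)=\sum_{s'}\widetilde{N}^k_h(s,a,b,s')$ together with the lower-bound guarantees $\widetilde{N}^k_h(s,a,b,s')>0$ and $\widetilde{N}^k_h(s,a,b)\geq N^k_h(s,a,b)$. The raw Binary Mechanism outputs are noisy and need not sum consistently, be positive, or dominate the true counts. I would therefore rely on the post-processing construction to shift and reconcile the counts — intuitively, adding an offset of order $E_{\epsilon,\beta}$ to each count to guarantee positivity and domination, then redistributing the $(s,a,b,s')$ counts so their sum matches the chosen $(s,a,b)$ total. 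The delicate point is showing that this reconciliation only inflates the error by a constant factor, so that the final $\widetilde{N}^k_h$ still satisfies (1) with $E_{\epsilon,\beta}=\widetilde{O}(H/\epsilon)$; since the whole step is data-independent given $\widehat{N}^k_h$, it preserves the JDP guarantee by post-processing immunity. I expect verifying this error-preservation claim to be the technically demanding part of the proof, closely paralleling (and generalizing to the two-player action space $\mathcal{A}\times\mathcal{B}$) the corresponding argument in \citet{qiao2023near}.
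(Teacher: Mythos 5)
Your proposal follows essentially the same route as the paper's proof: establish $\epsilon$-DP of the released counter streams via the Binary Mechanism plus composition over the $O(H)$ streams that a single trajectory can affect (the paper packages exactly this accounting as Theorem 3.5 of \citet{chan2011private} together with Lemma 34 of \citet{hsu2014private}), upgrade to $\epsilon$-JDP by post-processing immunity and the Billboard Lemma, and obtain utility from the Binary Mechanism's accuracy plus a union bound, with properties (1)--(2) of Assumption \ref{assump} secured by the post-processing step. The only cosmetic difference is that you re-derive the error-preservation of the reconciliation step, which the paper has already isolated as Lemma \ref{lem:middle}.
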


Combining Lemma \ref{lem:central} with Theorem \ref{thm:main} and Theorem \ref{thm:pac}, we have the following regret $\&$ PAC guarantee under $\epsilon$-JDP.

\begin{theorem}[Results under JDP]\label{thm:central}
For any possible $\epsilon,\beta$, with probability $1-\beta$, the regret from running DP-Nash-VI (Algorithm \ref{alg:main}) instantiated with Central Privatizer satisfies:
\begin{equation}
    \mathrm{Regret}(K) \leq \widetilde{O}\left(\sqrt{H^2SABT}+H^3S^2AB/\epsilon\right).
\end{equation}
Moreover, if the number of episodes $K$ is larger than $\widetilde{\Omega}\left(\frac{H^3SAB}{\alpha^2}+\frac{H^3S^2AB}{\epsilon\alpha}\right)$, with probability $1-\beta$, the output policy $(\mu^{\mathrm{out}},\nu^{\mathrm{out}})$ is $\alpha$-approximate Nash.
\end{theorem}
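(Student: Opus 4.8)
The plan is to obtain Theorem~\ref{thm:central} as a direct corollary of the generic guarantees in Theorem~\ref{thm:main} and Theorem~\ref{thm:pac}, instantiated with the privacy and accuracy properties of the Central Privatizer established in Lemma~\ref{lem:central}. First I would invoke Lemma~\ref{lem:central} to assert that the Central Privatizer simultaneously (i) renders the released counts $\epsilon$-JDP and (ii) satisfies Assumption~\ref{assump} with confidence parameter $\e=\widetilde{O}(H/\epsilon)$. For the privacy half of the statement, the $\epsilon$-JDP guarantee of the full Algorithm~\ref{alg:main} follows because every action recommendation $\pi^k$ is computed purely from the privatized counts $\N$ (through the private transition estimate $\p$, the bonuses $\gamma_h^k$ and $\Gamma_h^k$, and the CCE step); hence the action stream is a post-processing of the JDP counts, and by the billboard-lemma argument underlying Lemma~\ref{lem:central} the joint output remains $\epsilon$-JDP.

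For the regret bound I would substitute $\e=\widetilde{O}(H/\epsilon)$ into the conclusion of Theorem~\ref{thm:main}. Since the privacy-dependent term there is $H^2S^2AB\,\e$, it becomes
\[
H^2 S^2 AB \cdot \widetilde{O}\!\left(\frac{H}{\epsilon}\right)=\widetilde{O}\!\left(\frac{H^3 S^2 AB}{\epsilon}\right),
\]
so that with probability at least $1-\beta$,
\[
\mathrm{Regret}(K)\leq\widetilde{O}\!\left(\sqrt{H^2SABT}+\frac{H^3 S^2 AB}{\epsilon}\right),
\]
as claimed. The statistical main term $\sqrt{H^2SABT}$ is untouched by the substitution.

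For the PAC guarantee I would feed the same value of $\e$ into the episode threshold of Theorem~\ref{thm:pac}. The first summand $\widetilde{\Omega}(H^3SAB/\alpha^2)$ is unchanged, while the implicit second summand $\min\{K'\mid H^2S^2AB\,\e/K'\leq\alpha\}$ must be evaluated: solving $H^2S^2AB\cdot\widetilde{O}(H/\epsilon)/K'\leq\alpha$ for $K'$ gives $K'=\widetilde{\Theta}(H^3 S^2 AB/(\epsilon\alpha))$. Hence $K\geq\widetilde{\Omega}(H^3SAB/\alpha^2+H^3 S^2 AB/(\epsilon\alpha))$ suffices to make $(\mu^{\mathrm{out}},\nu^{\mathrm{out}})$ an $\alpha$-approximate Nash equilibrium.

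The only point requiring care—and the step I expect to be the main (though minor) obstacle—is that $\e$ for the Binary Mechanism carries a poly-logarithmic dependence on $K$ (indeed $\e=O(\tfrac{H}{\epsilon}\log(HSABK/\beta)^2)$), so when solving the implicit inequality for $K'$ the unknown appears on both sides. I would resolve this by noting that the dependence is only polylogarithmic, so a single round of back-substitution (or simply absorbing the $\log K$ factors into the $\widetilde{\Theta}(\cdot)$ notation) closes the bound without altering the stated polynomial rate. All failure probabilities remain controlled at level $\beta$ by a union bound inherited from Theorem~\ref{thm:main}, Theorem~\ref{thm:pac}, and Lemma~\ref{lem:central}.
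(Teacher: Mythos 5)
Your proposal is correct and follows the paper's own proof exactly: the paper likewise obtains Theorem~\ref{thm:central} by plugging $\e=\widetilde{O}(H/\epsilon)$ from Lemma~\ref{lem:central} into Theorem~\ref{thm:main} and Theorem~\ref{thm:pac}, with the $\epsilon$-JDP guarantee already established in Lemma~\ref{lem:central} via post-processing and the billboard lemma. Your additional remarks on the polylogarithmic $K$-dependence in the implicit threshold and the union bound over failure events are sound elaborations of steps the paper leaves implicit.
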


Similar to the single-agent (MDP) setting ($B=1$), the additional cost due to JDP is a lower order term under the most prevalent regime where the privacy budget $\epsilon$ is a constant. When applied to the single-agent case, our regret matches the best known regret $\widetilde{O}\left(\sqrt{H^2SAT}+H^3S^2A/\epsilon\right)$ \citep{qiao2023near}. Moreover, when compared to the regret lower bound below, our main term is nearly optimal while the lower order term has optimal dependence on $\epsilon$.

\begin{theorem}\label{thm:lowerjdp}
For any algorithm $\mathrm{Alg}$ satisfying $\epsilon$-JDP, there exists a Markov Game such that the expected regret from running $\mathrm{Alg}$ for $K$ episodes ($T=HK$ steps) satisfies:
\begin{equation*}
    \mathbb{E}\left[\mathrm{Regret}(K)\right] \geq \widetilde{\Omega}\left(\sqrt{H^2S(A+B)T}+\frac{HS(A+B)}{\epsilon}\right).
\end{equation*}
\end{theorem}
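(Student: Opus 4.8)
The plan is to establish the two summands separately and combine them via $\max\{x,y\}\ge\tfrac12(x+y)$. The first term $\sqrt{H^2S(A+B)T}$ is information-theoretic and has nothing to do with privacy: an $\epsilon$-JDP algorithm is in particular an ordinary (randomized) learning algorithm, so the non-private minimax lower bound of \citet{bai2020provable} applies to it verbatim. I therefore concentrate all the work on the additive term $\frac{HS(A+B)}{\epsilon}$, which must be shown to be unavoidable precisely because of the JDP constraint.

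For the privacy term I would build a hard Markov Game that decouples into $\Theta(HS)$ essentially independent identification gadgets, one for (most) pairs $(h,s)$, using a layered/chain design so that each core state is reached with constant probability at its step. At each gadget the two players face a deterministic-reward bandit in which exactly one of the max-player's $A$ actions and one of the min-player's $B$ actions is ``good,'' with a constant gap $\Delta=\Theta(1)$; crucially the max-gadget and the min-gadget are kept separate so their regrets add, which is what produces the $(A+B)$ rather than $AB$ dependence, exactly as in the non-private $A+B$ lower bound. Deterministic rewards are used deliberately, so there is no statistical barrier and the only obstruction to playing optimally is privacy. If I can show each of the $\Theta(A+B)$ candidate ``good'' actions per gadget forces $\Omega(1/\epsilon)$ regret, then summing the independent contributions over the $\Theta(HS)$ gadgets and taking the worst instance over the family yields $\widetilde\Omega(HS(A+B)/\epsilon)$.

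The heart of the argument is the per-gadget bound $\Omega(1/\epsilon)$, which I would obtain from a hybrid/coupling argument built directly on JDP. Fix a gadget and two instances $\mathcal{I}_1,\mathcal{I}_2$ that differ only in which action is optimal there (optimal $a_1$ versus $a_2$). Because the algorithm is online, the policy it deploys in episode $k$ depends only on the trajectories of episodes $1,\dots,k-1$; replacing those $k-1$ users' responses from $\mathcal{I}_1$ to $\mathcal{I}_2$ modifies exactly $k-1$ trajectories, and since the actions received by user $k$ form part of $\mathcal{M}_{-j}$ for every $j<k$, group privacy gives $\P_{\mathcal{I}_1}[\text{play }a_1\text{ in episode }k]\le e^{(k-1)\epsilon}\,\P_{\mathcal{I}_2}[\text{play }a_1\text{ in episode }k]$. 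For all $k\le c/\epsilon$ this factor is $O(1)$, so the per-episode action distributions in the two worlds are comparable: if the algorithm plays the optimal $a_1$ with probability $\ge\tfrac12$ on $\mathcal{I}_1$ over these episodes, it must play the \emph{suboptimal} $a_1$ with constant probability on $\mathcal{I}_2$, and conversely. Either way one of the two instances accumulates $\Omega(\Delta/\epsilon)$ regret over the first $\Theta(1/\epsilon)$ episodes, and this bound is constant in $K$ exactly because only those early episodes are used.

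The main obstacle I expect is the design of the gadget family rather than the coupling itself: one must flip the optimal action while keeping the modification local to $O(1/\epsilon)$ influential trajectories, keep the $\Theta(HS)$ gadgets genuinely independent so that their regrets add, and --- most delicately --- separate the max- and min-player gadgets so that the zero-sum regret $V_1^{\dagger,\nu^k}(s_1)-V_1^{\mu^k,\dagger}(s_1)$ charges both players additively, yielding $A+B$ instead of $AB$. Verifying that the JDP guarantee (which only protects actions sent to \emph{other} users) is strong enough to bound the action distribution on the very users whose regret is charged, and that the resulting construction is a valid two-player zero-sum Markov Game, are the remaining points that need care; the concentration and the reduction from per-gadget to aggregate regret are then routine.
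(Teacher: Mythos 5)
Your overall architecture (handle the two terms separately, note that the non-private $\sqrt{H^2S(A+B)T}$ bound of \citet{bai2020provable} applies verbatim to any JDP algorithm, then build a hard-instance/group-privacy argument for the additive $HS(A+B)/\epsilon$ term) is the same as the paper's, which simply cites \citet{bai2020provable} for the first term and adapts the $\Omega(HSA/\epsilon)$ JDP lower bound of \citet{vietri2020private} for the second. However, your per-gadget coupling argument has a genuine quantitative gap: it cannot produce the $(A+B)$ factor, which is the entire content of the per-action analysis. Your coupling replaces \emph{all} of the first $k-1$ trajectories when switching from $\mathcal{I}_1$ to $\mathcal{I}_2$, so group privacy costs $e^{(k-1)\epsilon}$ regardless of which actions were actually played, and the argument is confined to the first $\Theta(1/\epsilon)$ episodes. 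What this shows is that the algorithm's early behavior is essentially instance-independent; but then, writing $p_k(a)$ for that (common, up to constants) action distribution in episode $k$, the regret of instance $\mathcal{I}_a$ over these episodes is of order $\Delta\sum_{k\le c/\epsilon}(1-p_k(a))$, and summing over the $A$ instances gives total $\Delta(A-1)c/\epsilon$, hence a \emph{maximum} over instances of only $\Omega(\Delta/\epsilon)$. Pairwise, your argument shows ``one of $\mathcal{I}_1,\mathcal{I}_2$ has regret $\Omega(1/\epsilon)$,'' but a single unlucky instance can be the bad one in every pair, so the worst case you certify per gadget is $\Omega(1/\epsilon)$, not $\Omega((A+B)/\epsilon)$. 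Indeed, an algorithm that plays uniformly at random for the first $c/\epsilon$ episodes incurs only $\Theta(1/\epsilon)$ regret in your gadget, showing your argument is tight at $HS/\epsilon$ and the $(A+B)$ factor is genuinely lost.

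The missing idea is the one underlying the private multi-armed-bandit lower bound that \citet{vietri2020private} adapt: the coupling must be \emph{localized to the episodes in which the candidate arm was played}, not to an episode prefix. Fix a base instance where arm $a_0$ is optimal; for each other arm $a$, the alternative instance $\mathcal{I}_a$ differs from the base only in the responses to pulls of arm $a$, so the two user sequences differ in (at most) as many users as there are pulls of $a$. Then either the algorithm pulls \emph{every} suboptimal arm $\Omega(1/\epsilon)$ times under the base instance---contributing $\Omega(\Delta(A-1)/\epsilon)$ regret there, and these contributions do add because they are pulls of distinct suboptimal arms on the \emph{same} instance---or some arm $a$ is pulled $o(1/\epsilon)$ times, in which case group privacy over those few users forces the algorithm to keep neglecting $a$ under $\mathcal{I}_a$, where it is optimal, incurring regret that grows with $K$. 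This packing argument (together with your correct observation that the max-player and min-player gadgets must be kept separate to get $A+B$ rather than $AB$, and a direct-sum argument across the $\Theta(HS)$ gadgets) is what yields $\widetilde\Omega(HS(A+B)/\epsilon)$; your prefix-replacement coupling, as stated, does not.
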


The regret lower bound results from the lower bound for the non-private learning \citep{bai2020provable} and an adaptation of the lower bound under JDP guarantees \citep{vietri2020private} to the multi-player setting. Details are deferred to the appendix.

\subsection{Local Privatizer for Local DP}
At the end of episode $k$, the Local Privatizer perturbs the statistics calculated from the new trajectory before sending it to the agent. Since the set of original visitation counts $\{\sigma^k_h(s,a,b)=\mathds{1}(s_h^k,a_h^k,b_h^k=s,a,b)\}_{(h,s,a,b)}$ has $\ell_1$ sensitivity $H$, we can achieve $\frac{\epsilon}{2}$-LDP by directly adding Laplace noise, \emph{i.e.}, $\widetilde{\sigma}^k_h(s,a,b)=\sigma^k_h(s,a,b)+\text{Lap}(\frac{2H}{\epsilon})$. Similarly, repeating the above perturbation to $\{\mathds{1}(s^k_h,a^k_h,b_h^k,s^k_{h+1}=s,a,b,s^\prime)\}_{(h,s,a,b,s^\prime)}$ will lead to identical results. Therefore, the Local Privatizer with budget $\epsilon$ is as below: 
\begin{enumerate}[leftmargin=0cm,itemindent=.5cm,labelwidth=\itemindent,labelsep=0cm,align=left]
    \item[(1)] We perturb $\sigma^k_h(s,a,b)=\mathds{1}(s_h^k,a_h^k,b_h^k=s,a,b)$ and $\sigma^k_h(s,a,b,s^\prime)=\mathds{1}(s^k_h,a^k_h,b_h^k,s^k_{h+1}=s,a,b,s^\prime)$ by adding independent Laplace noises: for all $(h,s,a,b,s^\prime,k)$, 
    \begin{equation}
    \begin{split}
    \widetilde{\sigma}^k_h(s,a,b)=\sigma^k_h(s,a,b)+\text{Lap}\left(\frac{2H}{\epsilon}\right),\\
    \widetilde{\sigma}^k_h(s,a,b,s^\prime)=\sigma^k_h(s,a,b,s^\prime)+\text{Lap}\left(\frac{2H}{\epsilon}\right).
    \end{split}
    \end{equation}
    \item[(2)] Then the noisy counts are derived according to 
    \begin{equation}
    \begin{split}
    \widehat{N}^k_h(s,a,b)=\sum_{i=1}^{k-1} \widetilde{\sigma}^i_h(s,a,b),\\
    \widehat{N}^k_h(s,a,b,s^\prime)=\sum_{i=1}^{k-1} \widetilde{\sigma}^i_h(s,a,b,s^\prime),
    \end{split}
    \end{equation}
    and the private counts $\widetilde{N}^k_h$ are solved through the procedure in Section \ref{sec:post} with\\ $\e=O(\frac{H}{\epsilon}\sqrt{K\log(HSABK/\beta)})$.
\end{enumerate}

Our Local Privatizer satisfies the privacy guarantee below.

\begin{lemma}\label{lem:local}
For any possible $\epsilon,\beta$ , the Local Privatizer satisfies $\epsilon$-LDP and Assumption \ref{assump} with $\e=\widetilde{O}(\frac{H}{\epsilon}\sqrt{K})$.
\end{lemma}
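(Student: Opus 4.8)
# Proof Proposal for Lemma \ref{lem:local}

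The plan is to establish two separate claims: the $\epsilon$-LDP guarantee and the fact that the Local Privatizer satisfies Assumption \ref{assump} with the stated $\e=\widetilde{O}(\frac{H}{\epsilon}\sqrt{K})$. For the privacy guarantee, I would first observe that for each single trajectory $X_k$, the quantities being released are the perturbed indicators $\{\widetilde{\sigma}^k_h(s,a,b)\}$ and $\{\widetilde{\sigma}^k_h(s,a,b,s^\prime)\}$. The key point is to bound the $\ell_1$-sensitivity of the vector of original indicators with respect to a change in one trajectory. Since a single trajectory visits exactly one $(s,a,b)$ tuple at each of the $H$ time steps, the vector $\{\sigma^k_h(s,a,b)\}_{(h,s,a,b)}$ has exactly $H$ nonzero entries (each equal to $1$), so swapping $X_k$ for $X_k^\prime$ changes the vector in $\ell_1$-norm by at most $2H$; the same holds for the $(s,a,b,s^\prime)$ indicators. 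I would then invoke the standard Laplace-mechanism guarantee: adding $\text{Lap}(2H/\epsilon)$ noise calibrated to $\ell_1$-sensitivity $2H$ yields $\frac{\epsilon}{2}$-LDP for each of the two released collections. Composing the two $\frac{\epsilon}{2}$-LDP releases by basic composition gives $\epsilon$-LDP for the full per-trajectory output, and since the noisy counts $\widehat{N}^k_h$ in step (2) are a post-processing of these private releases, the overall mechanism remains $\epsilon$-LDP.

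For Assumption \ref{assump}, the noisy counts are $\widehat{N}^k_h(s,a,b)=\sum_{i=1}^{k-1}\widetilde{\sigma}^i_h(s,a,b)=N^k_h(s,a,b)+\sum_{i=1}^{k-1}\text{Lap}(2H/\epsilon)$, i.e. the true count plus a sum of $k-1\le K$ i.i.d. Laplace variables, and similarly for the $(s,a,b,s^\prime)$ counts. The central estimate I need is a high-probability uniform bound on the magnitude of these accumulated noises. I would apply a concentration inequality for sums of i.i.d. Laplace random variables (a Bernstein-type tail bound for sub-exponential variables), which shows that with probability at least $1-\beta'$, $\left|\sum_{i=1}^{k-1}\text{Lap}(2H/\epsilon)\right|\le O\!\left(\frac{H}{\epsilon}\sqrt{K\log(1/\beta')}\right)$ simultaneously. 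Taking a union bound over all $(h,s,a,b,s^\prime,k)$ tuples (there are $\le HS^2ABK$ of them) sets $\beta'$ so that the $\log(1/\beta')$ factor becomes $\log(HSABK/\beta)$, yielding the claimed $\e=\widetilde{O}(\frac{H}{\epsilon}\sqrt{K})$ bound on $|\widehat{N}^k_h-N^k_h|$ uniformly.

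The final—and technically most delicate—step is to transfer this guarantee on the raw noisy counts $\widehat{N}^k_h$ to the post-processed counts $\widetilde{N}^k_h$ so that both parts of Assumption \ref{assump} hold. The noisy counts $\widehat{N}^k_h$ do not by themselves satisfy the structural requirements of the assumption: they may be non-integral, negative, or fail the consistency constraint that the marginal equals the sum over $s^\prime$, and they need not dominate the true count $N^k_h(s,a,b)$. The post-processing procedure of Section \ref{sec:post} is designed to repair exactly these defects while inflating the error by at most a constant (or logarithmic) factor; thus I would appeal to the guarantee of that procedure (established generically therein, independent of whether the input came from the Central or Local Privatizer) to conclude that $\widetilde{N}^k_h$ satisfies $|\widetilde{N}^k_h-N^k_h|\le \e$, the positivity $\widetilde{N}^k_h(s,a,b,s^\prime)>0$, the marginal-consistency $\widetilde{N}^k_h(s,a,b)=\sum_{s^\prime}\widetilde{N}^k_h(s,a,b,s^\prime)$, and the domination $\widetilde{N}^k_h(s,a,b)\ge N^k_h(s,a,b)$, all on the same high-probability event, with the stated $\e=\widetilde{O}(\frac{H}{\epsilon}\sqrt{K})$. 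The main obstacle is verifying that the post-processing preserves the error bound up to constants while simultaneously enforcing all four structural properties; this is where invoking the self-contained analysis of Section \ref{sec:post} does the heavy lifting, reducing the Local-Privatizer-specific work to the concentration bound of the previous paragraph.
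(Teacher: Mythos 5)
Your proposal follows the paper's own proof in all three steps: the Laplace mechanism plus composition and post-processing for the privacy claim, concentration of sums of i.i.d.\ Laplace variables with a union bound over all $(h,s,a,b,s^\prime,k)$ for the uniform bound $|\widehat{N}^k_h-N^k_h|\leq \widetilde{O}(\frac{H}{\epsilon}\sqrt{K})$, and an appeal to the generic post-processing guarantee (Lemma \ref{lem:middle}) to transfer that bound to $\widetilde{N}^k_h$ and obtain all structural requirements of Assumption \ref{assump}. The utility half and the transfer step are correct as written and coincide with the paper's argument.

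The privacy half, however, contains an arithmetic inconsistency that, taken at face value, breaks the stated conclusion. You bound the $\ell_1$-sensitivity of each indicator collection by $2H$ (which is indeed what the replace-one-trajectory notion of neighboring in Definition \ref{def:ldp} gives, since each time step can contribute $2$ to the $\ell_1$ distance), but then claim that adding $\mathrm{Lap}(2H/\epsilon)$ noise calibrated to sensitivity $2H$ yields $\frac{\epsilon}{2}$-LDP per collection. The Laplace mechanism with noise scale $b$ and sensitivity $\Delta$ gives $(\Delta/b)$-DP; with $\Delta=2H$ and $b=2H/\epsilon$ this is $\epsilon$-LDP per collection, hence $2\epsilon$-LDP after composing the two collections --- not $\epsilon$-LDP as the lemma asserts. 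The paper's proof is internally consistent because it asserts sensitivity $H$ for each collection, so that $\mathrm{Lap}(2H/\epsilon)$ gives $\frac{\epsilon}{2}$-LDP each and $\epsilon$-LDP in total. So you must either justify a sensitivity bound of $H$ (e.g., adopt the paper's convention), or concede that the mechanism as specified is only $2\epsilon$-LDP and rescale the noise to $\mathrm{Lap}(4H/\epsilon)$, which changes only constants in $E_{\epsilon,\beta}$ and leaves the regret asymptotics untouched. As written, your premise and your conclusion do not match, and since the claim being proved is an exact privacy guarantee (not an asymptotic one), this factor of $2$ is a genuine gap rather than a harmless constant.
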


Combining Lemma \ref{lem:local} with Theorem \ref{thm:main} and Theorem \ref{thm:pac}, we have the following regret $\&$ PAC guarantee under $\epsilon$-LDP.

\begin{theorem}[Results under LDP]\label{thm:local}
For any possible $\epsilon,\beta$, with probability $1-\beta$, the regret from running DP-Nash-VI (Algorithm \ref{alg:main}) instantiated with Local Privatizer satisfies:
\begin{equation}
    \mathrm{Regret}(K) \leq \widetilde{O}\left(\sqrt{H^2SABT}+S^2AB\sqrt{H^5T}/\epsilon\right).
\end{equation}
Moreover, if the number of episodes $K$ is larger than $\widetilde{\Omega}\left(\frac{H^3SAB}{\alpha^2}+\frac{H^6S^4A^2B^2}{\epsilon^2\alpha^2}\right)$, with probability $1-\beta$, the output policy $(\mu^{\mathrm{out}},\nu^{\mathrm{out}})$ is $\alpha$-approximate Nash.
\end{theorem}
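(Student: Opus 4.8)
The plan is to obtain Theorem \ref{thm:local} as a direct instantiation of the generic guarantees of Theorem \ref{thm:main} and Theorem \ref{thm:pac}, feeding in the concrete noise parameter $\e$ supplied by the Local Privatizer. First I would invoke Lemma \ref{lem:local}, which certifies two facts at once: the Local Privatizer is $\epsilon$-LDP, and it satisfies Assumption \ref{assump} with $\e=\widetilde{O}(\frac{H}{\epsilon}\sqrt{K})$. Since $\epsilon$-LDP is a property of the privatization mechanism alone, and since the conclusions of Theorems \ref{thm:main} and \ref{thm:pac} require only that the Privatizer obey Assumption \ref{assump}, the privacy claim of Theorem \ref{thm:local} is inherited verbatim from Lemma \ref{lem:local}. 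This leaves only the two quantitative bounds to verify.

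For the regret, I would substitute $\e=\widetilde{O}(\frac{H}{\epsilon}\sqrt{K})$ into the additional-cost term $H^2S^2AB\,\e$ of Theorem \ref{thm:main}, obtaining $H^2S^2AB\cdot\widetilde{O}(\frac{H}{\epsilon}\sqrt{K})=\widetilde{O}(\frac{H^3S^2AB\sqrt{K}}{\epsilon})$. Using $T=HK$, hence $\sqrt{K}=\sqrt{T}/\sqrt{H}$, this rewrites as $\widetilde{O}(\frac{H^{5/2}S^2AB\sqrt{T}}{\epsilon})=\widetilde{O}(S^2AB\sqrt{H^5T}/\epsilon)$, which together with the unchanged leading term $\sqrt{H^2SABT}$ reproduces the claimed regret bound. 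This part is purely mechanical algebra.

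The more delicate step is the PAC threshold, where the $K$-dependence of $\e$ becomes essential. The second summand of the episode requirement in Theorem \ref{thm:pac} is $\min\{K' : H^2S^2AB\,\e/K'\leq\alpha\}$, and, as emphasized in the footnote there, $\e$ is itself a function of the running episode count; so in solving for $K'$ I would insert $\e=\widetilde{O}(\frac{H}{\epsilon}\sqrt{K'})$. The inequality then reads $\frac{H^3S^2AB}{\epsilon\sqrt{K'}}\leq\alpha$, i.e. $\sqrt{K'}\geq\frac{H^3S^2AB}{\epsilon\alpha}$, which squares to $K'\geq\widetilde{\Omega}(\frac{H^6S^4A^2B^2}{\epsilon^2\alpha^2})$. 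Adding the statistical term $\frac{H^3SAB}{\alpha^2}$ from Theorem \ref{thm:pac} yields exactly the stated threshold $\widetilde{\Omega}(\frac{H^3SAB}{\alpha^2}+\frac{H^6S^4A^2B^2}{\epsilon^2\alpha^2})$.

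I expect the only genuine subtlety to be this implicit solve in the PAC step: unlike the JDP case, where $\e=\widetilde{O}(H/\epsilon)$ is independent of $K$ so the additional cost decays like $1/K$, the Local Privatizer's aggregated Laplace noise grows as $\sqrt{K}$, so the additional cost decays only like $1/\sqrt{K}$ and the dependence on $1/(\epsilon\alpha)$ is squared. The care required is to confirm the substitution is self-consistent—that the $K'$ solving the inequality is precisely the horizon at which $\e$ is evaluated—but once that is checked, the bound follows, and everything else reduces to plugging the Local Privatizer's parameter into the already-established general theorems.
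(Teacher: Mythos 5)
Your proposal is correct and takes exactly the paper's route: the paper's own proof is simply ``plug $E_{\epsilon,\beta}=\widetilde{O}(\frac{H}{\epsilon}\sqrt{K})$ from Lemma \ref{lem:local} into Theorem \ref{thm:main} and Theorem \ref{thm:pac},'' and your algebra for both the regret conversion (via $T=HK$) and the implicit solve for $K'$ in the PAC threshold is what that plug-in entails. Your observation about the self-consistency of evaluating $E_{\epsilon,\beta}$ at $K'$ is a fair articulation of the subtlety the paper flags in its footnote but otherwise leaves implicit.
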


Similar to the single-agent case, the additional cost due to LDP is a multiplicative factor to the regret bound. When applied to the single-agent case, our regret matches the best known regret $\widetilde{O}\left(\sqrt{H^2SAT}+S^2A\sqrt{H^5T}/\epsilon\right)$ \citep{qiao2023near}. Moreover, we state the lower bound below.

\begin{theorem}\label{thm:lowerldp}
For any algorithm $\mathrm{Alg}$ satisfying $\epsilon$-LDP, there exists a Markov Game such that the expected regret from running $\mathrm{Alg}$ for $K$ episodes ($T=HK$ steps) satisfies:
\begin{equation*}
    \mathbb{E}\left[\mathrm{Regret}(K)\right] \geq \widetilde{\Omega}\left(\sqrt{H^2S(A+B)T}+\frac{\sqrt{HS(A+B)T}}{\epsilon}\right).
\end{equation*}
\end{theorem}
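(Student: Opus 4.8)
The plan is to prove the two terms of the bound separately. The first term $\sqrt{H^2S(A+B)T}$ is the price of learning a Nash equilibrium even \emph{without} any privacy constraint: since every $\epsilon$-LDP algorithm is in particular a valid (randomized) learning algorithm, the non-private minimax lower bound of \citet{bai2020provable} applies verbatim, giving $\mathbb{E}[\mathrm{Regret}(K)]\geq\Omega(\sqrt{H^2S(A+B)T})$. All the remaining work goes into the private term $\frac{\sqrt{HS(A+B)T}}{\epsilon}$, which I would obtain by adapting the single-agent LDP construction of \citet{garcelon2021local} to the two-player setting, replacing the $A$-dependence there by $(A+B)$.

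For the construction, I would build a family of hard Markov Games indexed by a hidden parameter $\theta$ that encodes, at each of the $S$ states, which single action is ``good.'' To produce the $(A+B)$ scaling I would split the states (or the action coordinates) into two groups: in one group the max-player has a hidden optimal action among her $A$ choices while the min-player is inactive, and in the other group the roles are reversed with the min-player's $B$ choices; a small reward gap $\Delta$ separates the good action from the rest. The layered/horizon structure is chosen (as in the standard RL bandit-embedding lower bounds) so that each of the $\approx S(A+B)$ hidden coordinates is independently and repeatedly ``tested,'' and so that the duality gap $V_1^{\dagger,\nu^k}(s_1)-V_1^{\mu^k,\dagger}(s_1)$ decomposes as $\Delta$ times the number of coordinates at which the deployed marginal policy places mass on a sub-optimal action. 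This reduces lower-bounding the regret to lower-bounding, for each hidden coordinate, the number of episodes in which the agent has not yet identified the good action.

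The key privacy ingredient is the contraction of information under local privacy. Whereas a non-private algorithm observes the raw trajectory $X_k$, an $\epsilon$-LDP algorithm only sees $\widetilde{\mathcal{M}}(X_k)$, and by the Duchi--Jordan--Wainwright contraction the KL divergence between the \emph{privatized} observation distributions under two neighboring instances is at most $O(\epsilon^2)$ times the squared total-variation distance between the corresponding raw-trajectory distributions, rather than the raw KL itself. Since flipping a single $\Delta$-gap action changes the raw trajectory distribution by only $O(\Delta)$ in total variation, each episode leaks only $O(\epsilon^2\Delta^2)$ nats about the relevant hidden coordinate. I would then run an Assouad-type argument over the $\Theta(S(A+B))$ independent coordinates: to drive the per-coordinate identification-error probability below a constant the agent needs $\Omega(1/(\epsilon^2\Delta^2))$ informative episodes, so that summing the per-mistake regret $\approx H\Delta$ (the gap accumulated over the $H$ steps of an episode) over all coordinates yields a bound of the form $\mathbb{E}[\mathrm{Regret}(K)]\gtrsim \min\{KH\Delta,\; S(A+B)H/(\epsilon^2\Delta)\}$.

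The final step is to optimize the gap. Balancing the two regimes above gives $\Delta\asymp \sqrt{S(A+B)/(\epsilon^2 K)}$, and substituting $T=HK$ turns $KH\Delta$ into exactly the claimed private term $\frac{\sqrt{HS(A+B)T}}{\epsilon}$; combining with the non-private term finishes the proof. The main obstacle I anticipate is not the privacy contraction---that is essentially a black-box inequality---but engineering the two-player instance so that (i) the duality-gap regret $V_1^{\dagger,\nu^k}(s_1)-V_1^{\mu^k,\dagger}(s_1)$ decomposes \emph{additively} over the max-player's and the min-player's hidden coordinates, yielding a genuine $(A+B)$ rather than an $AB$ or $\max\{A,B\}$ dependence, and (ii) the horizon and state factors propagate correctly through the layered transitions so that the $\sqrt{H}$ and $S$ powers come out as stated. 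A further technical point is that the agent chooses its policies adaptively across episodes, so the privatized observations are not i.i.d.; this is handled in the standard way by conditioning on the interaction history and applying the chain rule for KL together with the per-episode LDP guarantee, so that the per-episode contraction still accumulates linearly in $K$.
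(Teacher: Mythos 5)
Your proposal is correct and takes essentially the same route as the paper, which also splits the bound into a non-private term obtained by invoking the lower bound of \citet{bai2020provable} (valid for any algorithm, LDP or not) and a private term obtained by adapting the single-agent $\epsilon$-LDP lower bound of \citet{garcelon2021local} to the two-player setting with $A$ replaced by $A+B$. The paper states this adaptation only at citation level, so your information-contraction and Assouad-style elaboration, including the gap-balancing that yields $\sqrt{HS(A+B)T}/\epsilon$, is simply a fleshed-out version of the same argument.
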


The lower bound is adapted from \citet{garcelon2021local}. While our regret has optimal dependence on $\epsilon$ and $K$, the optimal dependence on $H,S,A,B$ remains open.

\subsection{The post-processing step}\label{sec:post}
Now we introduce the post-processing step. At the end of episode $k$, given the noisy counts $\widehat{N}^k_h(s,a,b)$ and $\widehat{N}^k_h(s,a,b,s^\prime)$ for all $(h,s,a,b,s^\prime)$, the private visitation counts are constructed as following: for all $(h,s,a,b)$,
\begin{equation}\label{eqn:final_choice}
\begin{aligned}
&\left\{\widetilde{N}^k_h(s,a,b,s^{\prime})\right\}_{s^{\prime}\in\mathcal{S}}=\underset{\{x_{s^{\prime}}\}_{s^{\prime}\in\mathcal{S}}}{\operatorname*{argmin}}\; \max_{s^{\prime}\in\mathcal{S}} \left|x_{s^{\prime}}-\widehat{N}^k_h(s,a,b,s^\prime)\right| \\ &\text{such that}\,\, \left|\sum_{s^{\prime}\in\mathcal{S}}x_{s^{\prime}}-\widehat{N}^k_h(s,a,b)\right|\leq \frac{\e}{4}\,\,\text{and}\,\,x_{s^{\prime}}\geq 0,\; \forall\, s^{\prime}. \\
&\widetilde{N}^k_h(s,a,b)=\sum_{s^{\prime}\in\mathcal{S}}\widetilde{N}^k_h(s,a,b,s^\prime).
\end{aligned}
\end{equation}
Lastly, we add a constant term to each count to ensure that with high probability, no underestimation will happen. 
\begin{equation}\label{eqn:addsth}
\begin{split}
\widetilde{N}^k_h(s,a,b,s^{\prime})=\widetilde{N}^k_h(s,a,b,s^{\prime})+\frac{\e}{2S},\\\widetilde{N}^k_h(s,a,b)=\widetilde{N}^k_h(s,a,b)+\frac{\e}{2}.
\end{split}
\end{equation}

\begin{remark}
Solving problem \eqref{eqn:final_choice} is equivalent to solving:
\begin{equation*}\label{eqn:reformulate}
\begin{split}
&\min\,\,t,\,\,\text{s.t.}\, \left|x_{s^{\prime}}-\widehat{N}^k_h(s,a,b,s^\prime)\right|\leq t,\;x_{s^{\prime}}\geq 0,\,\,\forall\, s^\prime\in\mathcal{S},\\& \left|\sum_{s^{\prime}\in\mathcal{S}}x_{s^{\prime}}-\widehat{N}^k_h(s,a,b)\right|\leq \frac{\e}{4},
\end{split}
\end{equation*}
which is a \textbf{Linear Programming} problem with $O(S)$ variables and $O(S)$ linear constraints. This can be solved in polynomial time \citep{nemhauser1988polynomial}. Note that the computation of CCE (line 14 in Algorithm \ref{alg:main}) is also a LP problem, therefore the computational complexity of DP-Nash-VI is dominated by $O(HSABK)$ Linear Programming problems, which is computationally friendly.
\end{remark}

We summarize the properties of private counts $\widetilde{N}^k_h$ below, which says that the post-processing step ensures that our private transition kernel estimate is a valid probability distribution while only enlarging the error by a constant factor. 

\begin{lemma}\label{lem:middle}
Suppose $\widehat{N}^k_h$ satisfies that with probability $1-\frac{\beta}{3}$, uniformly over all $(h,s,a,b,s^\prime,k)$, it holds that
$$\left|\widehat{N}^k_h(s,a,b,s^\prime)-N^k_h(s,a,b,s^\prime)\right|\leq \frac{\e}{4},$$
$$\left|\widehat{N}^k_h(s,a,b)-N^k_h(s,a,b)\right|\leq \frac{\e}{4},$$
then the $\widetilde{N}^k_h$ derived above satisfies Assumption \ref{assump}.
\end{lemma}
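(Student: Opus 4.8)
The plan is to condition on the high-probability event on which $\widehat{N}$ is within $\e/4$ of the true counts (this event has probability at least $1-\beta/3$ by the hypothesis of the lemma) and then argue that every requirement of Assumption~\ref{assump} holds \emph{deterministically} on this event. The post-processing produces $\N$ in two stages: first the linear program \eqref{eqn:final_choice} yields intermediate counts, call them $\bar N^k_h(s,a,b,s')$ with $\bar N^k_h(s,a,b)=\sum_{s'}\bar N^k_h(s,a,b,s')$; then \eqref{eqn:addsth} shifts each per-transition count up by $\e/(2S)$ and the aggregate count up by $\e/2$. Since the shift adds total mass $S\cdot\e/(2S)=\e/2$ across the $S$ successor states, the identity $\N(s,a,b)=\sum_{s'}\N(s,a,b,s')$ is preserved, which is the consistency half of item~(2).

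The central observation is that the \emph{true} per-transition count vector $\{N^k_h(s,a,b,s')\}_{s'}$ is feasible for the linear program. Indeed each $N^k_h(s,a,b,s')\ge 0$, and since $\sum_{s'}N^k_h(s,a,b,s')=N^k_h(s,a,b)$ exactly, the conditioning event gives $|\sum_{s'}N^k_h(s,a,b,s')-\widehat N^k_h(s,a,b)|=|N^k_h(s,a,b)-\widehat N^k_h(s,a,b)|\le \e/4$, so the sum constraint is met. Evaluating the objective of \eqref{eqn:final_choice} at this feasible point gives $\max_{s'}|N^k_h(s,a,b,s')-\widehat N^k_h(s,a,b,s')|\le\e/4$, hence the optimal value satisfies $t^\star\le\e/4$ and the minimizer obeys $\max_{s'}|\bar N^k_h(s,a,b,s')-\widehat N^k_h(s,a,b,s')|\le\e/4$. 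A triangle inequality against $\widehat N$ then yields $|\bar N^k_h(s,a,b,s')-N^k_h(s,a,b,s')|\le\e/2$ for every $s'$, and combining the LP's aggregate constraint $|\bar N^k_h(s,a,b)-\widehat N^k_h(s,a,b)|\le\e/4$ with the conditioning event gives $|\bar N^k_h(s,a,b)-N^k_h(s,a,b)|\le\e/2$. Adding the shifts of \eqref{eqn:addsth} enlarges each of these by at most $\e/2$, so $|\N(s,a,b,s')-N^k_h(s,a,b,s')|\le\e/2+\e/(2S)\le\e$ and $|\N(s,a,b)-N^k_h(s,a,b)|\le\e$, establishing the two closeness bounds in item~(1).

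The remaining claims follow from the nonnegativity constraint and the precise calibration of the shifts. Positivity $\N(s,a,b,s')>0$ is immediate because $\bar N^k_h(s,a,b,s')\ge0$ forces $\N(s,a,b,s')\ge\e/(2S)>0$. For the no-underestimation half of item~(2), the lower bound $\bar N^k_h(s,a,b)\ge\widehat N^k_h(s,a,b)-\e/4\ge N^k_h(s,a,b)-\e/2$ combined with the $+\e/2$ shift yields $\N(s,a,b)\ge N^k_h(s,a,b)$. This last step is where the specific constant $\e/2$ in \eqref{eqn:addsth} is essential, and I expect it to be the main point to get right: the shift must be at least as large as the worst-case underestimation $\e/2$ incurred jointly by the additive noise and the LP relaxation, yet small enough that the error in item~(1) stays at $\e$ rather than degrading. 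No union bound is needed, since all of the above reasoning is deterministic on the single conditioning event and therefore holds simultaneously for every $(h,s,a,b,s',k)$; thus the whole statement holds with probability at least $1-\beta/3$.
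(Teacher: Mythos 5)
Your proof is correct and follows essentially the same route as the paper's: both hinge on observing that the true counts $\{N^k_h(s,a,b,s')\}_{s'}$ are feasible for the linear program (so the optimum deviates from $\widehat N$ by at most $\e/4$), then apply the triangle inequality and use the calibrated shifts $\e/(2S)$ and $\e/2$ to get positivity, the no-underestimation property, and the final $\e$ error bounds. Your write-up is if anything slightly more explicit than the paper's (e.g., verifying the feasibility constraints and tracking the $\e/2+\e/(2S)\le\e$ arithmetic), and no gaps remain.
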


\subsection{Some discussions}
In this part, we generalize the Privatizers in \citet{qiao2023near} (for single-agent case) to the two-player setting, which enables our usage of Bernstein-type bonuses. Such techniques lead to a tight regret analysis and a near-optimal ``non-private part'' of the regret bound eventually.

Meanwhile, the additional cost due to DP has sub-optimal dependence on parameters regarding the Markov Game. The issue appears even in the single-agent case and is considered to be inherent to model-based algorithms due to the explicit estimation of private transitions \citep{garcelon2021local}. The improvement requires new algorithmic designs (\emph{e.g.}, private Q-learning) and we leave those as future works.  

Lastly, the Laplace Mechanism can be replaced with other mechanisms, such as Gaussian Mechanism \citep{dwork2014algorithmic} with approximate DP guarantee (or zCDP). The regret and PAC guarantees are readily derived by plugging in the corresponding $\e$ to Theorem \ref{thm:main} and Theorem \ref{thm:pac}.


\section{Proof overview}\label{sec:ps}

In this section, we provide a proof sketch of Theorem \ref{thm:main}, which can further imply the PAC guarantee (Theorem \ref{thm:pac}) and the regret bounds under JDP (Theorem \ref{thm:central}) or LDP (Theorem \ref{thm:local}). The proof consists of the following steps:

\begin{enumerate}[leftmargin=0cm,itemindent=.5cm,labelwidth=\itemindent,labelsep=0cm,align=left]
    \item[(1)] Bound the difference between the private statistics and their non-private counterparts.
    \item[(2)] Prove that UCB and LCB hold with high probability.
    \item[(3)] Bound the regret via telescoping over time steps and replace the private terms by non-private ones.
\end{enumerate}

Below we explain the key steps in detail. Recall that $N_h^k$ denotes the real visitation counts, while $\N,\widetilde{P}_h^k$ are the private visitation counts and private transition kernel respectively. 

\noindent\textbf{Step (1).} According to Assumption \ref{assump} and standard concentration inequalities, we provide high probability upper bounds for $\|\widetilde{P}_h^k(\cdot|s,a,b)-P_h(\cdot|s,a,b)\|_1$ and $|\widetilde{P}_h^k(s^\prime|s,a,b)-P_h(s^\prime|s,a,b)|$. Besides, we upper bound the following key term $|(\p-P_h)\cdot V^\star_{h+1}(s,a,b)|$ by\\
$\widetilde{O}\left(\sqrt{\Var_{\hp(\cdot|s,a,b)}V^\star_{h+1}(\cdot)/\N(s,a,b)}+HS\e/\N(s,a,b)\right)$. Details are deferred to Appendix \ref{sec:app}.

\noindent\textbf{Step (2).} Then we prove that UCB and LCB hold with high probability via backward induction over timesteps (Appendix \ref{sec:proofucb}). More specifically, the variance term of $\Gamma_h^k$ is the private Bernstein-type bonus, while the difference between the private variance and its non-private counterpart can be bounded by $\gamma_h^k$ and the lower order terms in $\Gamma_h^k$.

\noindent\textbf{Step (3).} Lastly, the regret can be bounded by telescoping:
\begin{equation*}
    \begin{split}
        &\mathrm{Regret}(K)\leq \underbrace{O\left(\sum_{k=1}^K\sum_{h=1}^H \Gamma_h^k(s_h^k,a_h^k,b_h^k)\right)}_{\text{bound by non-private terms}} \\\leq& 
        \widetilde{O}\left(\underbrace{\sum_{k=1}^K\sum_{h=1}^H \sqrt{\frac{\Var_{P_h(\cdot|s_h^k,a_h^k,b_h^k)}V_{h+1}^{\pi^k}}{N_h^k(s_h^k,a_h^k,b_h^k)}}}_{\text{bound by Cauchy-Schwarz inequality and L.T.V.}}+\underbrace{\sum_{k=1}^K\sum_{h=1}^H\frac{HS\e}{N_h^k(s_h^k,a_h^k,b_h^k)}}_{\leq H^2S^2AB\e\iota}\right)\\\leq&
        \widetilde{O}(\sqrt{H^2SABT}+H^2S^2AB\e).
    \end{split}
\end{equation*}
The details about each inequality above and the lower order terms we ignore are deferred to Appendix \ref{sec:proofmain}.


\section{Conclusion}
We take the initial steps to study trajectory-wise privacy protection in multi-agent RL. We extend the definitions of Joint DP and Local DP to the multi-player RL setting. In addition, we design a provably-efficient algorithm: DP-Nash-VI (Algorithm \ref{alg:main}) that could satisfy either of the two DP constraints with corresponding regret guarantee. Moreover, our regret bounds strictly generalize the best known results under DP single-agent RL. There are various interesting future directions, such as improving the additional cost due to DP via model-free approaches and considering Markov Games with function approximations. We believe the techniques in this paper could serve as basic building blocks.   

\section*{Acknowledgments}
The research is partially supported by NSF Awards \#2007117 and \#2048091. 

\bibliographystyle{plainnat}
\bibliography{uai2024-template}

\newpage
\appendix

\section{Extended related works}\label{sec:erw}

\textbf{Differentially private reinforcement learning.} The stream of research on DP RL started from the offline setting. \citet{balle2016differentially} first studied 
privately evaluating the value of a fixed policy from running it for several episodes (the on policy setting). Later, \citet{xie2019privacy} considered a more general setting of DP off policy evaluation. Recently, \citet{qiao2022offline} provided the first results for offline reinforcement learning with DP guarantees.

More efforts focused on solving regret minimization. Under the setting of tabular MDP, \citet{vietri2020private} designed PUCB by privatizing UBEV \citep{dann2017unifying} to satisfy Joint DP. Besides, under the constraints of Local DP, \citet{garcelon2021local} designed LDP-OBI based on UCRL2 \citep{jaksch2010near}. \citet{chowdhury2021differentially} designed a general framework for both JDP and LDP based on UCBVI \citep{azar2017minimax}, and improved upon previous results. Finally, the best known results are obtained by \citet{qiao2023near} via incorporating Bernstein-type bonuses. Meanwhile, \citet{wu2023differentially} studied the case with heavy-tailed rewards. Under linear MDP, the only algorithm with JDP guarantee: Private LSVI-UCB \citep{ngo2022improved} is a private and low switching \footnote{For low switching RL, please refer to \citet{qiao2022sample,qiao2022near,qiao2023logarithmic,qiao2024near}.} version of LSVI-UCB \citep{jin2020provably}, while LDP under linear MDP still remains open. Under linear mixture MDP, LinOpt-VI-Reg \citep{zhou2022differentially} generalized UCRL-VTR \citep{ayoub2020model} to guarantee JDP, while \citet{liao2023locally} also privatized UCRL-VTR for LDP guarantee. In addition, \citet{luyo2021differentially} provided a unified framework for analyzing joint and local DP exploration.

There are several other works regarding DP RL. \citet{wang2019privacy} proposed privacy-preserving Q-learning to protect the reward information. \citet{ono2020locally} studied the problem of distributed reinforcement learning under LDP. \citet{lebensold2019actor} presented an actor critic algorithm with differentially private critic. \citet{cundy2020privacy} tackled DP-RL under the policy gradient framework. \citet{chowdhury2021adaptive} considered the adaptive control
of differentially private linear quadratic (LQ) systems. \citet{zhao2023differentially} studied differentially private temporal difference (TD) learning. \cite{chowdhury2023differentially} analyzed reward estimation with preference feedback under the constraints of DP. \citet{hossain2023hiding,hossain2023brnes,zhao2023dpmac,gohari2023privacy} focused on the privatization of communications between multiple agents in multi-agent RL. For applications, DP RL was applied to protect sensitive information in natural language processing and large language models (LLM) \citep{ullah2023privacy,wu2023privately}. Meanwhile, \citet{zhao2022differentially} considered linear sketches with DP.

\section{Proof of main theorems}
In this section, we prove Theorem \ref{thm:main} and Theorem \ref{thm:pac}.

\subsection{Properties of private estimations}\label{sec:app}
We begin with some concentration results about our private transition kernel estimate $\widetilde{P}$ that will be useful for the proof. Throughout the paper, let the non-private empirical transition kernel be:
\begin{equation}
    \widehat{P}^k_h(s^\prime|s,a,b)=\frac{N^k_h(s,a,b,s^\prime)}{N^k_h(s,a,b)},\;\;\forall\; (h,s,a,b,s^\prime,k).   
\end{equation}

In addition, recall that our private transition kernel estimate is defined as below.

\begin{equation}
\widetilde{P}^k_h(s^\prime|s,a,b)=\frac{\widetilde{N}^k_h(s,a,b,s^\prime)}{\widetilde{N}^k_h(s,a,b)},\;\;\forall\; (h,s,a,b,s^\prime,k).
\end{equation}

Now we are ready to list the properties below. Note that $\iota=\log(30HSABK/\beta)$ throughout the paper.

\begin{lemma}\label{lem:p1}
With probability $1-\frac{\beta}{15}$, for all $(h,s,a,b,k)\in[H]\times\mathcal{S}\times\mathcal{A}\times\mathcal{B}\times[K]$, it holds that:
\begin{equation}
    \left\|\p(\cdot|s,a,b)-P_h(\cdot|s,a,b)\right\|_1\leq 2\sqrt{\frac{S\iota}{\widetilde{N}^k_h(s,a,b)}}+\frac{2S\e}{\N(s,a,b)},
\end{equation}
\begin{equation}
        \left\|\p(\cdot|s,a,b)-\hp(\cdot|s,a,b)\right\|_1\leq\frac{2S\e}{\N(s,a,b)}.
\end{equation}
\end{lemma}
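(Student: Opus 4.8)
The plan is to establish the second inequality first, since it is a purely deterministic consequence of Assumption \ref{assump}, and then to obtain the first inequality by combining it with a standard empirical concentration bound via the triangle inequality. Throughout I would work on the intersection of the event that Assumption \ref{assump} holds (probability at least $1-\beta/3$) and the event that the empirical estimate $\hp$ concentrates around $P_h$ (probability at least $1-\beta/15$, after a union bound over all $(h,s,a,b,k)$ and all realizable count values); the $1-\beta/15$ in the statement refers to this fresh concentration randomness, with the Assumption \ref{assump} event accounted for in the global budget.

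For the second inequality, fix $(h,s,a,b,k)$ and abbreviate $n_{s'}=N^k_h(s,a,b,s')$, $n=N^k_h(s,a,b)$, $\widetilde{n}_{s'}=\N(s,a,b,s')$, $\widetilde{n}=\N(s,a,b)$. The key is the algebraic identity
\begin{equation*}
\frac{\widetilde{n}_{s'}}{\widetilde{n}}-\frac{n_{s'}}{n}=\frac{\widetilde{n}_{s'}-n_{s'}}{\widetilde{n}}+\frac{n_{s'}\,(n-\widetilde{n})}{\widetilde{n}\,n}.
\end{equation*}
Summing absolute values over $s'$, the first group is at most $\frac{1}{\widetilde{n}}\sum_{s'}|\widetilde{n}_{s'}-n_{s'}|\le \frac{S\e}{\widetilde{n}}$ by Assumption \ref{assump}(1), while the second group equals $\frac{|n-\widetilde{n}|}{\widetilde{n}\,n}\sum_{s'}n_{s'}=\frac{|n-\widetilde{n}|}{\widetilde{n}}\le \frac{\e}{\widetilde{n}}$, again by Assumption \ref{assump}(1) together with $\sum_{s'}n_{s'}=n$. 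Adding the two and using $S\ge1$ yields $\norm{\p(\cdot|s,a,b)-\hp(\cdot|s,a,b)}_1\le \frac{(S+1)\e}{\widetilde{n}}\le\frac{2S\e}{\N(s,a,b)}$, as claimed. (When $n=0$ the empirical kernel $\hp$ is read off by the usual convention, and the bound there is subsumed by the first inequality below.)

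For the first inequality, the triangle inequality gives $\norm{\p-P_h}_1\le\norm{\p-\hp}_1+\norm{\hp-P_h}_1$, where the first summand is already controlled by $\frac{2S\e}{\widetilde{n}}$. For the second summand I would invoke the standard $\ell_1$ deviation bound for an empirical distribution over $S$ outcomes (Weissman's inequality), whose constants fit inside $\iota=\log(30HSABK/\beta)$ and give $\norm{\hp(\cdot|s,a,b)-P_h(\cdot|s,a,b)}_1\le\sqrt{\frac{2S\iota}{n}}$ uniformly after the union bound. The main obstacle is that the target has the \emph{private} count $\widetilde{n}$ in the denominator, whereas Assumption \ref{assump}(2) only supplies $\widetilde{n}\ge n$, i.e.\ the unhelpful direction for shrinking the denominator.

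I would settle this with a short case split. If $n\ge\widetilde{n}/2$, then $\sqrt{\frac{2S\iota}{n}}\le\sqrt{\frac{4S\iota}{\widetilde{n}}}=2\sqrt{\frac{S\iota}{\widetilde{n}}}$, and the two summands combine into exactly the stated bound. If instead $n<\widetilde{n}/2$, then $\widetilde{n}-n>\widetilde{n}/2$, so Assumption \ref{assump}(1) forces $\widetilde{n}<2\e$; in this low-count regime the concentration term is vacuous but the additive term alone rescues the bound, since $\frac{2S\e}{\widetilde{n}}>\frac{2S\e}{2\e}=S\ge 2\ge\norm{\p-P_h}_1$ (the last inequality because both arguments are probability distributions, and $S=1$ is trivial as both reduce to a point mass). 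Hence $\norm{\p-P_h}_1\le 2\sqrt{\frac{S\iota}{\widetilde{n}}}+\frac{2S\e}{\widetilde{n}}$ in either regime, completing the argument.
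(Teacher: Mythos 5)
Your proof is correct. The paper gives no argument of its own for this lemma---it simply cites Lemma B.2 and Remark B.3 of \citet{qiao2023near} as a direct generalization to the two-player setting---and your derivation (triangle inequality through the non-private empirical kernel $\hp$, a Weissman-type $\ell_1$ concentration bound on $\|\hp(\cdot|s,a,b)-P_h(\cdot|s,a,b)\|_1$ with a union bound over count values, the algebraic decomposition of $\p-\hp$ under Assumption \ref{assump}, and the case split on whether $N^k_h(s,a,b)\geq \N(s,a,b)/2$, which is exactly what legitimizes replacing the true count by the private count in the denominator) is precisely the standard route the cited single-agent proof takes, so it supplies the details this paper leaves implicit.
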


\begin{proof}[Proof of Lemma \ref{lem:p1}]
The proof is a direct generalization of Lemma B.2 and Remark B.3 in \citet{qiao2023near} to the two-player setting.
\end{proof}

\begin{lemma}\label{lem:p}
With probability $1-\frac{2\beta}{15}$, for all $(h,s,a,b,s^\prime,k)\in[H]\times\mathcal{S}\times\mathcal{A}\times\mathcal{B}\times\mathcal{S}\times[K]$, it holds that:
\begin{equation}\label{equ:p}
    \left|\p(s^\prime|s,a,b)-P_h(s^\prime|s,a,b)\right|\leq 2\sqrt{\frac{\min\{P_h(s^\prime|s,a,b),\p(s^\prime|s,a,b)\}\iota}{\widetilde{N}^k_h(s,a,b)}}+\frac{2\e\iota}{\N(s,a,b)},
\end{equation}
\begin{equation}
        \left|\p(s^\prime|s,a,b)-\hp(s^\prime|s,a,b)\right|\leq
        \frac{2\e}{\N(s,a,b)}.
\end{equation}
\end{lemma}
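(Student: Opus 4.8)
The plan is to prove the two inequalities in Lemma \ref{lem:p} by first establishing the analogous bounds for the \emph{non-private} empirical transition kernel $\hp$, and then transferring them to the private estimate $\p$ using the closeness of counts guaranteed by Assumption \ref{assump} together with the second inequality (the $\p$-vs-$\hp$ bound) that we prove along the way. The structure mirrors the $\ell_1$ result in Lemma \ref{lem:p1}, but now we track a single coordinate $s'$ with a Bernstein-type (empirical-Bernstein) bound so that the deviation is controlled by $\sqrt{\min\{P_h(s'|s,a,b),\p(s'|s,a,b)\}\,\iota/\N(s,a,b)}$ rather than by a worst-case $1/\sqrt{\N}$; the min inside the square root is what lets the bound be stated symmetrically in terms of either the true or the private probability.

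First I would recall that $N^k_h(s,a,b,s')$ is a sum of $N^k_h(s,a,b)$ i.i.d.\ Bernoulli$(P_h(s'|s,a,b))$ indicators, so a standard multiplicative/Bernstein concentration inequality (applied to $\hp$) gives, with high probability and uniformly over all tuples, a bound of the form
\begin{equation*}
\left|\hp(s'|s,a,b)-P_h(s'|s,a,b)\right|\leq 2\sqrt{\frac{\min\{P_h(s'|s,a,b),\hp(s'|s,a,b)\}\,\iota}{N^k_h(s,a,b)}}+\frac{C\iota}{N^k_h(s,a,b)},
\end{equation*}
which is exactly the single-coordinate analogue of the empirical-Bernstein step used in \citet{qiao2023near}. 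This is the only place a genuine probabilistic (union-bound + concentration) argument is needed; everything after is deterministic manipulation on the high-probability event.

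Next I would prove the $\p$-vs-$\hp$ bound. Writing $\p(s'|s,a,b)=\N(s,a,b,s')/\N(s,a,b)$ and $\hp(s'|s,a,b)=N^k_h(s,a,b,s')/N^k_h(s,a,b)$, I expand the difference as
\begin{equation*}
\p(s'|s,a,b)-\hp(s'|s,a,b)=\frac{\N(s,a,b,s')-\hp(s'|s,a,b)\,\N(s,a,b)}{\N(s,a,b)},
\end{equation*}
bound the numerator using Assumption \ref{assump}(1) (each private count is within $\e$ of its non-private counterpart, and $0\le\hp\le1$), and use Assumption \ref{assump}(2) in the denominator to get $\N(s,a,b)\ge N^k_h(s,a,b)$ and $\N(s,a,b,s')>0$. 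This yields $|\p-\hp|\le 2\e/\N(s,a,b)$, the second claimed inequality. The main obstacle here is being careful that the deviation in the denominator count does not inflate the bound — this is precisely why Assumption \ref{assump}(2) (the summation identity forcing $\N$ to dominate the true count and stay positive) is invoked rather than just the two-sided bound in part (1).

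Finally, I would combine the two pieces by the triangle inequality: $|\p-P_h|\le|\p-\hp|+|\hp-P_h|$. The first term contributes $2\e/\N(s,a,b)$; for the second term I substitute the Bernstein bound above and then convert the $N^k_h$ in the denominators to $\N$ using $\N(s,a,b)\ge N^k_h(s,a,b)$ (so $1/N^k_h\ge 1/\N$ goes the wrong way and must instead be handled by noting $\N$ and $N^k_h$ agree up to the additive $\e$, absorbing the slack into the additive $\e\iota/\N$ term). The remaining subtlety is replacing $\min\{P_h,\hp\}$ by $\min\{P_h,\p\}$ inside the square root: since $|\p-\hp|\le 2\e/\N$, whenever $\hp$ and $\p$ differ the gap is already of the additive order $\e/\N$, so I can trade one for the other and fold the discrepancy into the $\e\iota/\N$ additive term and into tuning the leading constants to land at the stated factor of $2$. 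I would remark that, as in Lemma \ref{lem:p1}, this whole argument is a faithful generalization of the corresponding single-player lemma in \citet{qiao2023near} to the $(s,a,b)$ index set, so the probability budget $2\beta/15$ comes from a union bound over the same concentration events plus the high-probability event of Assumption \ref{assump}.
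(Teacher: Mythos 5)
Your proposal is correct and takes essentially the same route as the paper: the paper's own proof of Lemma \ref{lem:p} is a one-line citation to Lemma B.4 and Remark B.5 of \citet{qiao2023near}, whose single-agent argument---a non-private empirical-Bernstein bound for $\hp$, the deterministic bound $|\p-\hp|\le 2\e/\N(s,a,b)$ obtained from Assumption \ref{assump}, and a triangle inequality with the denominator-conversion and min-swapping bookkeeping you describe---is exactly what you reconstruct for the $(s,a,b)$-indexed two-player setting. The only points you gloss (as does the cited proof) are edge cases such as $N^k_h(s,a,b)=0$, where $\hp$ is undefined but the claim holds trivially since Assumption \ref{assump} forces $\N(s,a,b)\le\e$ and hence $2\e\iota/\N(s,a,b)\ge 2\iota\ge 1$, and the exact value of the leading constant, which both arguments treat loosely.
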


\begin{proof}[Proof of Lemma \ref{lem:p}]
The proof is a direct generalization of Lemma B.4 and Remark B.5 in \citet{qiao2023near} to the two-player setting.
\end{proof}

\begin{lemma}\label{lem:pv}
With probability $1-\frac{2\beta}{15}$, for all $(h,s,a,b,k)\in[H]\times\mathcal{S}\times\mathcal{A}\times\mathcal{B}\times[K]$, it holds that:
\begin{equation}
    \left|\left(\p-P_h\right)\cdot V^\star_{h+1}(s,a,b)\right|\leq\min\left\{\sqrt{\frac{2\Var_{P_h(\cdot|s,a,b)}V^\star_{h+1}(\cdot)\cdot\iota}{\N(s,a,b)}},\sqrt{\frac{2\Var_{\hp(\cdot|s,a,b)}V^\star_{h+1}(\cdot)\cdot\iota}{\N(s,a,b)}}\right\}+\frac{2HS\e\iota}{\N(s,a,b)},
\end{equation}
\begin{equation}
        \left|\left(\p-\hp\right)\cdot V^\star_{h+1}(s,a,b)\right|\leq \frac{2HS\e}{\N(s,a,b)}.
\end{equation}
\end{lemma}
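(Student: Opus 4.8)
The plan is to prove the two displayed bounds separately, starting with the second one, which feeds into the first. The key structural fact I would exploit is that $V^\star_{h+1}$ is the \emph{true} Nash value function and is therefore a fixed, data-independent vector with entries in $[0,H]$. For the second inequality, write $(\p-\hp)\cdot V^\star_{h+1}(s,a,b)=\sum_{s'}(\p(s'|s,a,b)-\hp(s'|s,a,b))V^\star_{h+1}(s')$ and apply H\"older: $|(\p-\hp)\cdot V^\star_{h+1}(s,a,b)|\le\norm{\p(\cdot|s,a,b)-\hp(\cdot|s,a,b)}_1\cdot\|V^\star_{h+1}\|_\infty\le H\norm{\p(\cdot|s,a,b)-\hp(\cdot|s,a,b)}_1$. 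The second bound of Lemma \ref{lem:p1} then gives $\le H\cdot 2S\e/\N(s,a,b)=2HS\e/\N(s,a,b)$, which is exactly the claim.

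For the first inequality I would decompose $(\p-P_h)\cdot V^\star_{h+1}=(\hp-P_h)\cdot V^\star_{h+1}+(\p-\hp)\cdot V^\star_{h+1}$. The second summand is already bounded by $2HS\e/\N\le 2HS\e\iota/\N$ (since $\iota\ge1$), so everything reduces to controlling the non-private deviation $(\hp-P_h)\cdot V^\star_{h+1}$. Because $V^\star_{h+1}$ is fixed, $(\hp-P_h)V^\star_{h+1}(s,a,b)$ is an average of $N_h^k(s,a,b)$ bounded martingale differences, so a Bernstein (Freedman) inequality together with a union bound over all $(h,s,a,b,k)$ yields, with high probability, $|(\hp-P_h)V^\star_{h+1}(s,a,b)|\le\sqrt{2\Var_{P_h(\cdot|s,a,b)}V^\star_{h+1}\cdot\iota/N_h^k(s,a,b)}+cH\iota/N_h^k(s,a,b)$ for a universal constant $c$. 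This produces the population-variance form; the empirical-variance form (with $\Var_{\hp}$) follows either from an empirical-Bernstein inequality or from the standard estimate that $|\Var_{P_h}V^\star_{h+1}-\Var_{\hp}V^\star_{h+1}|$ is itself lower order and absorbable. As both bounds hold on the same high-probability event, their minimum is also valid, producing the $\min$ in the statement.

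The hard part will be replacing the real count $N_h^k$ by the private count $\N$ appearing in the statement, since Assumption \ref{assump}(2) only gives $N_h^k\le\N$, which is the unfavorable direction for the variance term. Here I would combine both parts of Assumption \ref{assump} to obtain $N_h^k(s,a,b)\le\N(s,a,b)\le N_h^k(s,a,b)+\e$, i.e. $N_h^k\ge\N-\e$, and then split on the size of $\N$. When $\N\ge2\e$ we have $N_h^k\ge\N/2$, so $1/N_h^k\le2/\N$: the variance term grows by at most an absorbable constant and the $cH\iota/N_h^k$ term is dominated by $2HS\e\iota/\N$. When $\N<2\e$, the additive term alone satisfies $2HS\e\iota/\N> HS\iota\ge H$, while $|(\p-P_h)V^\star_{h+1}|\le H$ holds trivially (both $\p$, valid by Assumption \ref{assump}(2), and $P_h$ are probability distributions and $V^\star_{h+1}\in[0,H]$), so the inequality is satisfied outright. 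Tuning these constants so the leading factor comes out as exactly $\sqrt2$ is the only delicate bookkeeping.

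Finally I would collect the failure events from the Bernstein step and from invoking Lemma \ref{lem:p1} through a union bound, so that everything holds simultaneously with probability at least $1-2\beta/15$ as stated. The entire argument mirrors the single-player analysis in \citet{qiao2023near}, with the opponent action $b$ carried through the state-action index and the dependence on $S,A,B$ tracked accordingly; the two-player structure introduces no new difficulty in this lemma precisely because $V^\star_{h+1}$ remains data-independent, which is what licenses the scalar Bernstein concentration.
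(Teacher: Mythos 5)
Your plan follows the same route as the intended proof: the paper's own ``proof'' is a one-line citation to Lemma B.6 and Remark B.7 of \citet{qiao2023near}, whose argument is exactly what you outline --- pass through the empirical kernel $\hp$, apply Bernstein with a union bound to $(\hp-P_h)V^\star_{h+1}$ (licensed by the data-independence of $V^\star_{h+1}$), and control the private-vs-empirical difference via Assumption \ref{assump}. Your proof of the second displayed inequality (H\"older plus the $\ell_1$ bound of Lemma \ref{lem:p1}) is correct, and empirical Bernstein is the right way to get the $\Var_{\hp}$ form inside the $\min$; your fallback of absorbing $|\Var_{P_h}-\Var_{\hp}|$ would fail for the same scaling reason discussed next, so empirical Bernstein is the option to take.

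The genuine gap is the count-replacement step, and it is not ``delicate bookkeeping'': no tuning of your case split can recover the stated constant. On the event $\N(s,a,b)\ge 2\e$ you get $N^k_h(s,a,b)\ge\N(s,a,b)/2$ and hence a bound of $\sqrt{2}\cdot\sqrt{2\Var_{P_h(\cdot|s,a,b)}V^\star_{h+1}\cdot\iota/\N(s,a,b)}$; the excess over the stated term decays like $1/\sqrt{\N(s,a,b)}$, while the available slack $2HS\e\iota/\N(s,a,b)$ decays like $1/\N(s,a,b)$, so for large $\N(s,a,b)$ the excess cannot be absorbed, whatever threshold you choose. Your argument therefore proves the lemma with constant $2$ in place of $\sqrt2$ --- which would in fact suffice for every downstream use, since the bonus in Algorithm \ref{alg:main} only needs universal constants $C_1,C_2$ --- but it does not establish the inequality as stated. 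The fix is to keep the ratio $N^k_h/\N\le 1$ as a prefactor rather than inverting the count comparison. Suppressing the argument $(s,a,b)$,
\begin{equation*}
(\p-P_h)V^\star_{h+1}
=\underbrace{\frac{\sum_{s'}\left(\N(s,a,b,s')-N^k_h(s,a,b,s')\right)V^\star_{h+1}(s')}{\N}}_{\text{at most } HS\e/\N}
+\frac{N^k_h}{\N}\,(\hp-P_h)V^\star_{h+1}
+\underbrace{\frac{N^k_h-\N}{\N}\,P_hV^\star_{h+1}}_{\text{at most } H\e/\N \text{ in absolute value}}.
\end{equation*}
Applying Bernstein to the middle deviation and using $N^k_h\le\N$ from Assumption \ref{assump}(2),
\begin{equation*}
\frac{N^k_h}{\N}\sqrt{\frac{2\Var_{P_h(\cdot|s,a,b)}V^\star_{h+1}\cdot\iota}{N^k_h}}
=\sqrt{\frac{N^k_h}{\N}}\cdot\sqrt{\frac{2\Var_{P_h(\cdot|s,a,b)}V^\star_{h+1}\cdot\iota}{\N}}
\le\sqrt{\frac{2\Var_{P_h(\cdot|s,a,b)}V^\star_{h+1}\cdot\iota}{\N}},
\end{equation*}
so the denominator converts exactly, with no constant loss and no case analysis; the Bernstein additive term likewise picks up the factor $N^k_h/\N$ and becomes $O(H\iota/\N)$, and the additive pieces total $O\left(H(S\e+\iota)/\N\right)$, which is absorbed into the stated $2HS\e\iota/\N$ term (using $S\e\ge 1$, as holds for both Privatizers).
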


\begin{proof}[Proof of Lemma \ref{lem:pv}]
The proof is a direct generalization of Lemma B.6 and Remark B.7 in \citet{qiao2023near} to the two-player setting.
\end{proof}

According to a union bound, the following lemma holds.

\begin{lemma}\label{lem:concentrate}
Under the high probability event that Assumption \ref{assump} holds, with probability at least $1-\frac{\beta}{3}$, the conclusions in Lemma \ref{lem:p1}, Lemma \ref{lem:p}, Lemma \ref{lem:pv} hold simultaneously.
\end{lemma}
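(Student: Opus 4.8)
The plan is to prove Lemma \ref{lem:concentrate} by a straightforward union bound over the three preceding concentration lemmas, together with the high-probability event from Assumption \ref{assump}. I treat each of Lemma \ref{lem:p1}, Lemma \ref{lem:p}, and Lemma \ref{lem:pv} as already established, noting that each of them is stated ``with probability $1-c\beta/15$'' for the appropriate constant $c$, and each is proved \emph{conditionally} on the event that Assumption \ref{assump} holds (since the bounds reference $\e$ and the closeness of private counts to real counts). The goal is simply to combine these into a single event, holding with probability at least $1-\beta/3$, on which all three conclusions are simultaneously valid.

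First I would fix the high-probability event $\mathcal{E}_0$ of Assumption \ref{assump}, which holds with probability at least $1-\beta/3$ and guarantees $|\widetilde{N}^k_h(s,a,b,s^\prime)-N^k_h(s,a,b,s^\prime)|\leq\e$, the analogous bound for $\widetilde{N}^k_h(s,a,b)$, positivity of the counts, and the summation/monotonicity property (2). Conditioned on $\mathcal{E}_0$, Lemma \ref{lem:p1} fails with probability at most $\beta/15$, Lemma \ref{lem:p} fails with probability at most $2\beta/15$, and Lemma \ref{lem:pv} fails with probability at most $2\beta/15$. Summing these failure probabilities gives $\beta/15+2\beta/15+2\beta/15=5\beta/15=\beta/3$, so by a union bound the conclusions of all three lemmas hold simultaneously with conditional probability at least $1-\beta/3$. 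Since the statement of Lemma \ref{lem:concentrate} is itself phrased \emph{under} the event that Assumption \ref{assump} holds, this conditional union bound is exactly what is required, and no further intersection with $\mathcal{E}_0$ is needed in the probability accounting.

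There is essentially no analytical obstacle here; the only point requiring minor care is bookkeeping of the failure probabilities and confirming that the constants $1/15$, $2/15$, $2/15$ were chosen precisely so that their sum is $1/3$. I would also remark that the three lemmas are not independent events, but a union bound requires no independence, so summing the failure probabilities suffices regardless of their joint distribution. One should verify that each underlying lemma is indeed a statement conditional on Assumption \ref{assump} rather than an unconditional statement---if any were unconditional, one would instead need to intersect with $\mathcal{E}_0$ and the total failure probability would become $\beta/3+\beta/3=2\beta/3$; but as phrased (with the $\iota$ and $\e$ terms depending on the private counts), they are conditional, so the clean $1-\beta/3$ bound follows. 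The proof is therefore a one-line union bound, and the main work is purely the allocation of the privacy-budget failure probabilities into thirds and fifteenths across the preceding results.
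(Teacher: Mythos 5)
Your proof is correct and matches the paper's argument: the paper itself establishes this lemma by exactly the union bound you describe, summing the failure probabilities $\beta/15 + 2\beta/15 + 2\beta/15 = \beta/3$ over the three concentration lemmas (each of which is proved conditionally on Assumption \ref{assump}). Your additional bookkeeping about conditioning on the event of Assumption \ref{assump} is consistent with how the paper accounts for the total failure probability in the proof of Theorem \ref{thm:main}.
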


Throughout the proof, we will assume that Assumption \ref{assump} and Lemma \ref{lem:concentrate} hold, which will happen with high probability. Before we prove the main theorems, we present the following lemma which bounds the two variances.

\begin{lemma}[Lemma C.5 of \citet{qiao2022offline}]\label{lem:qiao}
For any function $V\in\mathbb{R}^{S}$ such that $\|V\|_\infty \leq H$, it holds that
\begin{equation}
   \left|\sqrt{\Var_{\p(\cdot|s,a,b)}(V)}-\sqrt{\Var_{\hp(\cdot|s,a,b)}(V)}\right|\leq \sqrt{3}H\cdot\sqrt{\left\|\p(\cdot|s,a,b)-\hp(\cdot|s,a,b)\right\|_1}.
\end{equation}
In addition, according to Lemma \ref{lem:p1}, the left hand side can be further bounded by
\begin{equation}
    \left|\sqrt{\Var_{\p(\cdot|s,a,b)}(V)}-\sqrt{\Var_{\hp(\cdot|s,a,b)}(V)}\right|\leq 3H\sqrt{\frac{S\e}{\N(s,a,b)}}.
\end{equation}
\end{lemma}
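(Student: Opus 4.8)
The plan is to prove the pointwise variance-difference bound directly from the algebraic identity $\Var_\mu(V)=\E_\mu[V^2]-(\E_\mu[V])^2$ and then convert it into a bound on the difference of standard deviations via the elementary inequality $|\sqrt{a}-\sqrt{b}|\le\sqrt{|a-b|}$, valid for all $a,b\ge 0$. Throughout, fix $(h,s,a,b,k)$ and abbreviate $p=\p(\cdot|s,a,b)$ and $q=\hp(\cdot|s,a,b)$, both probability distributions on $\mathcal{S}$, and write $\mu_p=\E_p[V]$ and $\mu_q=\E_q[V]$.

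First I would bound the difference of the (un-rooted) variances. Using $\Var_p(V)-\Var_q(V)=(\E_p[V^2]-\E_q[V^2])-(\mu_p^2-\mu_q^2)$, I treat the two parenthesized terms separately. For the first, $|\E_p[V^2]-\E_q[V^2]|=|\sum_{s'}(p(s')-q(s'))V(s')^2|\le\norm{V}_\infty^2\,\norm{p-q}_1\le H^2\norm{p-q}_1$, since $\norm{V}_\infty\le H$. For the second, factor $\mu_p^2-\mu_q^2=(\mu_p-\mu_q)(\mu_p+\mu_q)$; here $|\mu_p-\mu_q|=|\sum_{s'}(p(s')-q(s'))V(s')|\le H\norm{p-q}_1$ while $|\mu_p+\mu_q|\le 2H$, so $|\mu_p^2-\mu_q^2|\le 2H^2\norm{p-q}_1$. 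Adding the two contributions gives the key estimate $|\Var_p(V)-\Var_q(V)|\le 3H^2\norm{p-q}_1$.

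Next I would pass to the standard deviations. Since both variances are nonnegative, the inequality $|\sqrt{a}-\sqrt{b}|\le\sqrt{|a-b|}$ yields $|\sqrt{\Var_p(V)}-\sqrt{\Var_q(V)}|\le\sqrt{|\Var_p(V)-\Var_q(V)|}\le\sqrt{3H^2\norm{p-q}_1}=\sqrt{3}\,H\sqrt{\norm{p-q}_1}$, which is exactly the first displayed bound. For the second bound I substitute the $\ell_1$ closeness of $\p$ and $\hp$ from Lemma \ref{lem:p1}, namely $\norm{p-q}_1\le 2S\e/\N(s,a,b)$, obtaining $\sqrt{3}H\sqrt{2S\e/\N(s,a,b)}=\sqrt{6}\,H\sqrt{S\e/\N(s,a,b)}\le 3H\sqrt{S\e/\N(s,a,b)}$, where the last step uses $\sqrt{6}\le 3$.

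There is no genuine obstacle here: the argument is purely algebraic once one resists the temptation to route through the centered-$L^2$-norm characterization of the standard deviation (i.e.\ $\sqrt{\Var_q(V)}=\min_c\norm{V-c}_{q,2}$), which only delivers the weaker constant $2H$ through the crude bound $\max_{s'}|V(s')-\mu_q|\le 2H$. The one point requiring care is to keep the variance in the $\E[V^2]-(\E V)^2$ form and exploit the cancellation $|\mu_p+\mu_q|\le 2H$, which is precisely what produces the sharp factor $3H^2$ (hence $\sqrt{3}\,H$) rather than $4H^2$; the final inequality $|\sqrt{a}-\sqrt{b}|\le\sqrt{|a-b|}$ and the substitution of Lemma \ref{lem:p1} are then immediate.
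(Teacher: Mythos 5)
Your proposal is correct, and where the paper actually argues, it argues exactly as you do: the second displayed bound is obtained in the paper by substituting the bound $\|\p(\cdot|s,a,b)-\hp(\cdot|s,a,b)\|_1\leq 2S\e/\N(s,a,b)$ from Lemma \ref{lem:p1} into the first bound and using $\sqrt{6}\leq 3$, which is precisely your final step. The difference is that the paper never proves the first inequality at all: it is imported verbatim as Lemma C.5 of \citet{qiao2022offline}. You instead supply a complete elementary derivation of it, via the decomposition
\begin{equation*}
\left|\Var_{\p}(V)-\Var_{\hp}(V)\right|\leq\left|\E_{\p}[V^2]-\E_{\hp}[V^2]\right|+\left|(\E_{\p}V)^2-(\E_{\hp}V)^2\right|\leq H^2\|\p-\hp\|_1+2H^2\|\p-\hp\|_1,
\end{equation*}
followed by $|\sqrt{a}-\sqrt{b}|\leq\sqrt{|a-b|}$; both steps are valid, the factorization $(\mu_p-\mu_q)(\mu_p+\mu_q)$ with $|\mu_p+\mu_q|\leq 2H$ is what yields the sharp constant $\sqrt{3}$, and this is the standard argument behind the cited lemma. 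So your write-up buys self-containedness (no reliance on the external reference) at essentially no extra length, while the paper's treatment buys brevity by citation; there is no substantive mathematical divergence, and your constants match the statement exactly.
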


\subsection{Proof of UCB and LCB}\label{sec:proofucb}
For notational simplicity, for $V\in\mathbb{R}^{S}$ such that $\|V\|_\infty \leq H$, we define 
\begin{equation}
    \begin{split}
        \widetilde{V}_h^k V(s,a,b)=\Var_{\p(\cdot|s,a,b)}V(\cdot),\;\;\;\;V_h V(s,a,b)=\Var_{P_h(\cdot|s,a,b)}V(\cdot).
    \end{split}
\end{equation}

Then the bonus term $\Gamma$ can be represented as below ($C_2$ is the universal constant in Algorithm \ref{alg:main}).

\begin{equation}
\Gamma^k_h(s,a,b)=C_2\sqrt{\frac{\widetilde{V}_h^k\left(\frac{\overline{V}^k_{h+1}+\underline{V}^k_{h+1}}{2}\right)(s,a,b)\cdot\iota}{\widetilde{N}^k_h(s,a,b)}}+\frac{C_2HSE_{\epsilon,\beta}\cdot\iota}{\widetilde{N}^k_h(s,a,b)}+\frac{C_2H^2 S\iota}{\widetilde{N}^k_h(s,a,b)}.
\end{equation}

We state the following lemma that can bound the lower order term, which is helpful for proving UCB and LCB.

\begin{lemma}\label{lem:lower}
    Suppose Assumption \ref{assump} and Lemma \ref{lem:concentrate} hold, then there exists a universal constant $c_1>0$ such that: if function $g(s)$ satisfies $|g|(s)\leq (\overline{V}_{h+1}^k-\underline{V}_{h+1}^k)(s)$, then it holds that:
    \begin{equation}
    \begin{split}
        \left|(\p-P_h)g(s,a,b)\right|\leq& \frac{c_1}{H}\min\left\{P_h(\overline{V}_{h+1}^k-\underline{V}_{h+1}^k)(s,a,b),\p(\overline{V}_{h+1}^k-\underline{V}_{h+1}^k)(s,a,b)\right\}\\+&\frac{c_1H^2S\iota}{\N(s,a,b)}+\frac{c_1HS\e\iota}{\N(s,a,b)}.
    \end{split}
    \end{equation}
\end{lemma}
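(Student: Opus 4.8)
The plan is to bound $(\p-P_h)g$ coordinate-by-coordinate using the per-entry concentration of Lemma \ref{lem:p}, and then repackage the resulting sum via a weighted AM-GM inequality so that the ``signal'' recombines into a term of the form $\frac{1}{H}P_h(\overline{V}^k_{h+1}-\underline{V}^k_{h+1})$ while the leftover collapses into the two stated lower-order terms. Throughout, write $W:=\overline{V}^k_{h+1}-\underline{V}^k_{h+1}$ and note that the hypothesis $|g|(s')\le W(s')$ forces $W(s')\ge 0$ pointwise, and that $W(s')\le H$ since both $\overline{V}$ and $\underline{V}$ take values in $[0,H]$.

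First I would apply the triangle inequality, $\left|(\p-P_h)g(s,a,b)\right|\le\sum_{s'}\left|\p(s'|s,a,b)-P_h(s'|s,a,b)\right|\cdot|g(s')|$, and substitute the entrywise bound of Lemma \ref{lem:p}, which splits the sum into a square-root contribution and a contribution proportional to $\frac{2\e\iota}{\N(s,a,b)}$. The latter is immediate: using $|g(s')|\le W(s')\le H$ and summing over the $S$ states gives $\frac{2\e\iota}{\N(s,a,b)}\sum_{s'}|g(s')|\le\frac{2SH\e\iota}{\N(s,a,b)}$, which is absorbed by the claimed $\frac{c_1HS\e\iota}{\N(s,a,b)}$ term.

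For the square-root contribution, keeping first the branch $\min\{\cdots\}\le P_h(s'|s,a,b)$, each summand is $2|g(s')|\sqrt{P_h(s'|s,a,b)\,\iota/\N(s,a,b)}$, which I would rewrite as $2\sqrt{X_{s'}Y_{s'}}$ with $X_{s'}=P_h(s'|s,a,b)|g(s')|$ and $Y_{s'}=\iota|g(s')|/\N(s,a,b)$, then apply the weighted AM-GM $2\sqrt{XY}\le\frac{1}{H}X+HY$. Summing over $s'$, bounding $|g(s')|\le W(s')$ inside the $X$-terms and $|g(s')|\le H$ inside the $Y$-terms, yields $\frac{1}{H}P_h W(s,a,b)+\frac{H^2S\iota}{\N(s,a,b)}$, matching the first and second claimed terms. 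Choosing instead the branch $\min\{\cdots\}\le\p(s'|s,a,b)$ produces the identical bound with $\p W$ in place of $P_h W$, so taking the minimum of the two derivations delivers the stated $\min\{P_h(\cdots),\p(\cdots)\}$. Collecting all pieces and taking $c_1$ to be a large enough universal constant completes the argument.

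I expect the only genuine subtlety to be the calibration of the AM-GM weight: the factor $1/H$ is precisely what turns $|g(s')|\sqrt{P_h(s'|s,a,b)\,\iota/\N(s,a,b)}$ into a signal term scaled by $1/H$ plus an error term carrying an extra factor of $H$, so that after summing over the $S$ coordinates the error lands at $\frac{H^2S\iota}{\N(s,a,b)}$ rather than a larger power of $H$. Everything else is routine bookkeeping with the two inequalities $|g|\le W$ and $W\le H$.
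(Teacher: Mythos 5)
Your proposal is correct and follows essentially the same route as the paper's proof: triangle inequality, the entrywise concentration bound of Lemma \ref{lem:p}, an AM-GM step with weights $1/H$ and $H$, and summation over states using $|g|\le \overline{V}^k_{h+1}-\underline{V}^k_{h+1}\le H$, with the $\widetilde{P}^k_h$ branch handled symmetrically to produce the minimum. The only cosmetic difference is that you fold $|g(s')|$ into both factors before applying AM-GM, whereas the paper first replaces $|g|$ by $\overline{V}^k_{h+1}-\underline{V}^k_{h+1}$ and applies AM-GM to the square root alone; the resulting bounds are identical.
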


\begin{proof}[Proof of Lemma \ref{lem:lower}]
If $|g|(s)\leq (\overline{V}_{h+1}^k-\underline{V}_{h+1}^k)(s)$, it holds that:    
\begin{equation}
    \begin{split}
        &\left|(\p-P_h)g(s,a,b)\right|\leq\sum_{s^\prime}\left|\left(\p-P_h\right)(s^\prime|s,a,b)\right|\cdot|g|(s^\prime)\\\leq& \sum_{s^\prime}\left|\left(\p-P_h\right)(s^\prime|s,a,b)\right|\cdot\left(\overline{V}_{h+1}^k-\underline{V}_{h+1}^k\right)(s^\prime)\\\leq&\sum_{s^\prime}\left(2\sqrt{\frac{P_h(s^\prime|s,a,b)\iota}{\widetilde{N}^k_h(s,a,b)}}+\frac{2\e\iota}{\N(s,a,b)}\right)\cdot\left(\overline{V}_{h+1}^k-\underline{V}_{h+1}^k\right)(s^\prime)\\\leq&
        \sum_{s^\prime}\left(\frac{P_h(s^\prime|s,a,b)}{H}+\frac{H\iota}{\widetilde{N}^k_h(s,a,b)}+\frac{2\e\iota}{\N(s,a,b)}\right)\cdot\left(\overline{V}_{h+1}^k-\underline{V}_{h+1}^k\right)(s^\prime)\\\leq&
        \frac{c_1}{H}P_h(\overline{V}_{h+1}^k-\underline{V}_{h+1}^k)(s,a,b)+\frac{c_1H^2S\iota}{\N(s,a,b)}+\frac{c_1HS\e\iota}{\N(s,a,b)},
    \end{split}
\end{equation}
where the third inequality is because of Lemma \ref{lem:p}. The forth inequality results from AM-GM inequality. The last inequality holds for some universal constant $c_1$.

The empirical part with the R.H.S to be $\p$ can be proven using identical proof according to \eqref{equ:p}.
\end{proof}

Then we prove that the UCB and LCB functions are actually upper and lower bounds of the best responses. Recall that $\pi^k$ is the (correlated) policy executed in the $k$-th episode and $(\mu^k,\nu^k)$ for both players are the marginal policies of $\pi^k$. In other words, $\mu^k_h(\cdot|s)=\sum_{b\in\mathcal{B}}\pi_h^k(\cdot,b|s)$ and $\nu^k_h(\cdot|s)=\sum_{a\in\mathcal{A}}\pi_h^k(a,\cdot|s)$ for all $(h,s)\in[H]\times\mathcal{S}$.

\begin{lemma}\label{lem:ucblcb}
    Suppose Assumption \ref{assump} and Lemma \ref{lem:concentrate} hold, then there exist universal constants $C_1,C_2>0$ (in Algorithm \ref{alg:main}) such that for all $(h,s,a,b,k)\in[H]\times\mathcal{S}\times\mathcal{A}\times\mathcal{B}\times[K]$, it holds that:
    \begin{equation}
        \begin{cases} \overline{Q}_h^k(s,a,b)\geq Q_h^{\dagger,\nu^k}(s,a,b)\geq Q_h^{\mu^k,\dagger}(s,a,b)\geq\underline{Q}_h^k(s,a,b),  \\ \overline{V}_h^k(s)\geq V_h^{\dagger,\nu^k}(s)\geq V_h^{\mu^k,\dagger}(s)\geq\underline{V}_h^k(s). \end{cases}
    \end{equation}
\end{lemma}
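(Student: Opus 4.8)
The plan is to prove the chain of inequalities by backward induction on $h$, going from $h=H+1$ down to $h=1$, simultaneously establishing the UCB/LCB statements for both $Q$ and $V$. The base case $h=H+1$ is immediate since $\overline{V}^k_{H+1}=\underline{V}^k_{H+1}=0=V^{\dagger,\nu^k}_{H+1}=V^{\mu^k,\dagger}_{H+1}$. For the inductive step, I assume the $V$-level bounds hold at $h+1$ and aim to derive the $Q$-level bounds at $h$, which then propagate back to the $V$-level bounds at $h$ via the CCE construction.

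The heart of the argument is the $Q$-level induction. For the UCB side, I would write $\overline{Q}^k_h(s,a,b)-Q^{\dagger,\nu^k}_h(s,a,b)$ and decompose it. Since $Q^{\dagger,\nu^k}_h=r_h+P_h V^{\dagger,\nu^k}_{h+1}$, and using the inductive hypothesis $\overline{V}^k_{h+1}\geq V^{\dagger,\nu^k}_{h+1}$, the difference is at least $\widetilde{P}^k_h\overline{V}^k_{h+1}-P_h V^{\dagger,\nu^k}_{h+1}+\gamma^k_h+\Gamma^k_h \geq (\widetilde{P}^k_h-P_h)V^{\dagger,\nu^k}_{h+1}+\gamma^k_h+\Gamma^k_h$ (before the $\min$ with $H$, which is handled separately since $Q^{\dagger,\nu^k}_h\leq H$ trivially). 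The task is thus to show $\gamma^k_h+\Gamma^k_h$ dominates $|(\widetilde{P}^k_h-P_h)V^{\dagger,\nu^k}_{h+1}|$. The obstacle is that $\Gamma^k_h$ is built from the \emph{private, empirical} variance of $(\overline{V}^k_{h+1}+\underline{V}^k_{h+1})/2$, whereas the concentration result (Lemma~\ref{lem:pv}) controls deviation against the variance of the true $V^\star_{h+1}$. The LCB side is symmetric, bounding $Q^{\mu^k,\dagger}_h-\underline{Q}^k_h$ analogously.

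The main obstacle, as anticipated, is bridging the variance terms. I would split $|(\widetilde{P}^k_h-P_h)V^{\dagger,\nu^k}_{h+1}|$ into $|(\widetilde{P}^k_h-P_h)V^\star_{h+1}|$ plus $|(\widetilde{P}^k_h-P_h)(V^{\dagger,\nu^k}_{h+1}-V^\star_{h+1})|$. The first piece is directly controlled by Lemma~\ref{lem:pv}, giving a $\sqrt{\Var_{\widehat{P}^k_h}V^\star_{h+1}/\N}$ term plus the lower-order $HS\e\iota/\N$ term. Since $\underline{V}^k_{h+1}\leq V^{\dagger,\nu^k}_{h+1}\leq\overline{V}^k_{h+1}$ by induction, the function $g=V^{\dagger,\nu^k}_{h+1}-V^\star_{h+1}$ (and similarly $V^\star_{h+1}$ relative to the midpoint) is sandwiched so that $|g|\leq\overline{V}^k_{h+1}-\underline{V}^k_{h+1}$, letting me invoke Lemma~\ref{lem:lower} to absorb the second piece into $\frac{c_1}{H}\widetilde{P}^k_h(\overline{V}^k_{h+1}-\underline{V}^k_{h+1})$ — exactly the form of $\gamma^k_h$ — plus lower-order terms matching the last two terms of $\Gamma^k_h$. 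Finally I must replace $\Var_{\widehat{P}^k_h}V^\star_{h+1}$ by $\widetilde{V}^k_h\big((\overline{V}^k_{h+1}+\underline{V}^k_{h+1})/2\big)$: here Lemma~\ref{lem:qiao} converts $\widehat{P}^k_h$-variance to $\widetilde{P}^k_h$-variance at the cost of a $3H\sqrt{S\e/\N}$ term (a lower-order contribution), and a standard argument shows that changing the evaluated function from $V^\star_{h+1}$ to the midpoint $(\overline{V}^k_{h+1}+\underline{V}^k_{h+1})/2$ costs at most a term controllable by $\overline{V}^k_{h+1}-\underline{V}^k_{h+1}$, again absorbed by $\gamma^k_h$. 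Choosing $C_1,C_2$ large enough relative to $c_1$ and the universal constants from the lemmas closes the induction.

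To complete the propagation from the $Q$-level to the $V$-level at step $h$, I use the CCE guarantee. Given $\overline{Q}^k_h\geq Q^{\dagger,\nu^k}_h$, taking $\pi^k_h=\text{CCE}(\overline{Q}^k_h,\underline{Q}^k_h)$, the best-response value satisfies $V^{\dagger,\nu^k}_h(s)=\max_{a'}\E_{b\sim\nu^k_h}Q^{\dagger,\nu^k}_h(s,a',b)\leq\max_{a'}\E_{(a,b)\sim\pi^k_h}\overline{Q}^k_h(s,a',b)\leq\E_{(a,b)\sim\pi^k_h}\overline{Q}^k_h(s,a,b)=\overline{V}^k_h(s)$, where the last inequality is precisely the CCE defining property for the max-player. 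A symmetric computation using the min-player CCE property gives $\underline{V}^k_h(s)\leq V^{\mu^k,\dagger}_h(s)$. The middle inequality $V^{\dagger,\nu^k}_h\geq V^{\mu^k,\dagger}_h$ (and its $Q$-analogue) holds unconditionally by the weak-duality minimax property of the value functions. This completes the inductive step and hence the lemma.
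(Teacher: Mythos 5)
Your proposal is correct and follows essentially the same route as the paper's proof: backward induction with the identical decomposition of $(\widetilde{P}^k_h-P_h)V^{\dagger,\nu^k}_{h+1}$ around $V^\star_{h+1}$, the same use of Lemma \ref{lem:pv} plus Lemma \ref{lem:lower} (via the sandwich $|V^{\dagger,\nu^k}_{h+1}-V^\star_{h+1}|\leq\overline{V}^k_{h+1}-\underline{V}^k_{h+1}$) and Lemma \ref{lem:qiao} with the midpoint-variance conversion, and the same CCE-based propagation from the $Q$-level to the $V$-level bounds. The only difference is a harmless shift in the induction bookkeeping (the paper goes $Q$ at $h+1$ to $V$ at $h+1$ to $Q$ at $h$, while you go $V$ at $h+1$ to $Q$ at $h$ to $V$ at $h$), which changes nothing of substance.
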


\begin{proof}[Proof of Lemma \ref{lem:ucblcb}]
    We prove by backward induction. For each $k\in[K]$, the conclusion is obvious for $h=H+1$. Suppose UCB and LCB hold for Q value functions in the $(h+1)$-th time step, we first prove the bounds for V functions in the $(h+1)$-th step and then prove the bounds for Q functions in the $h$-th step. For all $s\in\mathcal{S}$, it holds that
    \begin{equation}
        \begin{split}
        \overline{V}_{h+1}^k(s)=&\E_{\pi_{h+1}^k} \overline{Q}_{h+1}^k(s)\\\geq& \sup_{\mu}\E_{\mu,\nu^k_{h+1}}\overline{Q}_{h+1}^k(s)\\\geq& \sup_\mu\E_{\mu,\nu^k_{h+1}}Q_{h+1}^{\dagger,\nu^k}(s)\\=&V_{h+1}^{\dagger,\nu^k}(s).
        \end{split}
    \end{equation}
    The conclusion $\underline{V}_{h+1}^k(s)\leq V_{h+1}^{\mu^k,\dagger}(s)$ can be proven by symmetry. Therefore, it holds that
    \begin{equation}
        \overline{V}_{h+1}^k(s)\geq V_{h+1}^{\dagger,\nu^k}(s)\geq V_{h+1}^\star(s)\geq V_{h+1}^{\mu^k,\dagger}(s)\geq\underline{V}_{h+1}^k(s).
    \end{equation}
    Next we prove the bounds for Q value functions at the $h$-th step. For all $(s,a,b)$, it holds that
    \begin{equation}
        \begin{split}
            &\left(\overline{Q}_h^k-Q_h^{\dagger,\nu^k}\right)(s,a,b)\geq \min\left\{\left(\p\overline{V}_{h+1}^k-P_h V_{h+1}^{\dagger,\nu^k}+\gamma_h^k+\Gamma_h^k\right)(s,a,b),0\right\}\\\geq& \min\left\{\left(\p V_{h+1}^{\dagger,\nu^k}-P_h V_{h+1}^{\dagger,\nu^k}+\gamma_h^k+\Gamma_h^k\right)(s,a,b),0\right\}\\=& \min\left\{\underbrace{\left(\p-P_h\right)\left(V_{h+1}^{\dagger,\nu^k}-V_{h+1}^\star\right)(s,a,b)}_{\mathrm{(i)}}+\underbrace{\left(\p-P_h\right)V_{h+1}^\star(s,a,b)}_{\mathrm{(ii)}}+\gamma_h^k(s,a,b)+\Gamma_h^k(s,a,b),0\right\}.
        \end{split}
    \end{equation}
The absolute value of term (i) can be bounded as below.
\begin{equation}
        |\mathrm{(i)}|\leq \frac{c_1}{H}\p(\overline{V}_{h+1}^k-\underline{V}_{h+1}^k)(s,a,b)+\frac{c_1H^2S\iota}{\N(s,a,b)}+\frac{c_1HS\e\iota}{\N(s,a,b)},
\end{equation}
    for some universal constant $c_1$ according to Lemma \ref{lem:lower}.

The absolute value of term (ii) can be bounded as below.
\begin{equation}
    |\mathrm{(ii)}|\leq \sqrt{\frac{2\Var_{\hp(\cdot|s,a,b)}V^\star_{h+1}(\cdot)\cdot\iota}{\N(s,a,b)}}+\frac{2HS\e\iota}{\N(s,a,b)}\leq \sqrt{\frac{2\Var_{\p(\cdot|s,a,b)}V^\star_{h+1}(\cdot)\cdot\iota}{\N(s,a,b)}}+\frac{8HS\e\iota}{\N(s,a,b)},
\end{equation}
where the first inequality is because of Lemma \ref{lem:pv} while the second inequality holds due to Lemma \ref{lem:qiao}. 

We further bound the term $\Var_{\p(\cdot|s,a,b)}V^\star_{h+1}(\cdot)$ as below. 
\begin{equation}\label{equ:vardiff}
    \begin{split}
&\left|\widetilde{V}_h^k\left(\frac{\overline{V}_{h+1}^k+\underline{V}_{h+1}^k}{2}\right) -\widetilde{V}_h^k V^\star_{h+1}(\cdot)\right|(s,a,b)\\\leq&
\left|\p\cdot \left(\frac{\overline{V}_{h+1}^k+\underline{V}_{h+1}^k}{2}\right)^2-\p\cdot\left(V_{h+1}^\star\right)^2 \right|(s,a,b)+\left|\left[\p\cdot\left(\frac{\overline{V}_{h+1}^k+\underline{V}_{h+1}^k}{2}\right)(s,a,b)\right]^2-\left[\p V_{h+1}^\star(s,a,b)\right]^2\right|\\\leq&4H\p\cdot\left(\overline{V}_{h+1}^k-\underline{V}_{h+1}^k\right)(s,a,b).
    \end{split}
\end{equation}
Therefore, the term (ii) can be further bounded as below.
\begin{equation}
\begin{split}
    &|\mathrm{(ii)}|\leq \sqrt{\frac{2\Var_{\p(\cdot|s,a,b)}V^\star_{h+1}(\cdot)\cdot\iota}{\N(s,a,b)}}+\frac{8HS\e\iota}{\N(s,a,b)}\\\leq&\sqrt{\frac{2\iota\cdot\widetilde{V}_h^k\left(\frac{\overline{V}_{h+1}^k+\underline{V}_{h+1}^k}{2}\right)(s,a,b) +2\iota\cdot 4H\p\cdot\left(\overline{V}_{h+1}^k-\underline{V}_{h+1}^k\right)(s,a,b)}{\N(s,a,b)}}+\frac{8HS\e\iota}{\N(s,a,b)}\\\leq&\sqrt{\frac{2\widetilde{V}_h^k\left(\frac{\overline{V}_{h+1}^k+\underline{V}_{h+1}^k}{2}\right)(s,a,b)\iota}{\N(s,a,b)}}+\frac{\p\cdot\left(\overline{V}_{h+1}^k-\underline{V}_{h+1}^k\right)(s,a,b)}{H}+\frac{2H^2\iota}{\N(s,a,b)}+\frac{8HS\e\iota}{\N(s,a,b)},
\end{split}
\end{equation}
where the second inequality results from \eqref{equ:vardiff} and the third inequality is due to AM-GM inequality.

Combining the upper bounds of $|\mathrm{(i)}|$ and $|\mathrm{(ii)}|$, there exist universal constants $C_1,C_2>0$ such that 
\begin{equation}
\mathrm{(i)}+\mathrm{(ii)}+\gamma_h^k(s,a,b)+\Gamma_h^k(s,a,b)\geq 0.
\end{equation}
The inequality implies that $\left(\overline{Q}_h^k-Q_h^{\dagger,\nu^k}\right)(s,a,b)\geq 0$. By symmetry, we have  
$\left(\underline{Q}_h^k-Q_h^{\mu^k,\dagger}\right)(s,a,b)\leq 0$.
As a result, it holds that $\overline{Q}_h^k(s,a,b)\geq Q_h^{\dagger,\nu^k}(s,a,b)\geq Q_h^\star(s,a,b)\geq Q_h^{\mu^k,\dagger}(s,a,b)\geq\underline{Q}_h^k(s,a,b)$.

According to backward induction, the conclusion holds for all $(h,s,a,b,k)$.
\end{proof}

\subsection{Proof of Theorem \ref{thm:main}}\label{sec:proofmain}
Given the UCB and LCB property, we are now ready to prove our main results. We first state the following lemma that controls the error of the empirical variance estimator.

\begin{lemma}\label{lem:varerror}
Suppose Assumption \ref{assump} and Lemma \ref{lem:concentrate} hold, then there exists a universal constant $c_2>0$ such that for all $(h,s,a,b,k)\in[H]\times\mathcal{S}\times\mathcal{A}\times\mathcal{B}\times[K]$, it holds that
\begin{equation}\label{equ:varerror}
    \begin{split}
&\left|\widetilde{V}_h^k\left(\frac{\overline{V}_{h+1}^k+\underline{V}_{h+1}^k}{2}\right)-V_h V_{h+1}^{\pi^k}\right|(s,a,b)\\\leq& 4HP_h\left(\overline{V}_{h+1}^k-\underline{V}_{h+1}^k\right)(s,a,b)+\frac{c_2H^2S\e}{\N(s,a,b)}+c_2H^2\sqrt{\frac{S\iota}{\widetilde{N}^k_h(s,a,b)}}.
    \end{split}
\end{equation}
\end{lemma}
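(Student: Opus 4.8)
The plan is to split the target error into a \emph{function-mismatch} piece (with the measure frozen at the true kernel $P_h$) and a \emph{measure-mismatch} piece (with the function frozen), via
\begin{equation*}
\widetilde{V}_h^k\left(\tfrac{\overline{V}_{h+1}^k+\underline{V}_{h+1}^k}{2}\right)-V_h V_{h+1}^{\pi^k}
=\underbrace{\left[\widetilde{V}_h^k(f)-V_h(f)\right]}_{\mathrm{(I)}}
+\underbrace{\left[V_h(f)-V_h V_{h+1}^{\pi^k}\right]}_{\mathrm{(II)}},
\end{equation*}
where $f:=\tfrac{\overline{V}_{h+1}^k+\underline{V}_{h+1}^k}{2}$. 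The point of grouping it this way is that putting $P_h$ on term (II) avoids any conversion between $\p$ and $P_h$ on the leading term, which is what will leave the coefficient $4H$ intact, as demanded by the stated bound.

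For term (II) I would use the elementary identity $\Var_P(f)-\Var_P(g)=\mathrm{Cov}_P(f-g,\,f+g)$, so that whenever $f,g$ take values in $[0,H]$ we get $|\Var_P(f)-\Var_P(g)|\le 2H\,\E_P|f-g|+2H\,|\E_P(f-g)|\le 4H\,\E_P|f-g|$. The key input here is Lemma \ref{lem:ucblcb}: since $V_{h+1}^{\dagger,\nu^k}\ge V_{h+1}^{\pi^k}\ge V_{h+1}^{\mu^k,\dagger}$, the value $V_{h+1}^{\pi^k}$ is sandwiched in $[\underline{V}_{h+1}^k,\overline{V}_{h+1}^k]$, and $f$ lies trivially in the same interval, so $|f-V_{h+1}^{\pi^k}|\le \overline{V}_{h+1}^k-\underline{V}_{h+1}^k$ pointwise. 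Taking $P=P_h$ and $g=V_{h+1}^{\pi^k}$ then gives $|\mathrm{(II)}|\le 4H\,P_h(\overline{V}_{h+1}^k-\underline{V}_{h+1}^k)(s,a,b)$, the first term of the claim.

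For term (I) I would bound a variance difference of a \emph{fixed} bounded function under two measures: expanding $\Var_Q(f)-\Var_P(f)=\bigl(\E_Q[f^2]-\E_P[f^2]\bigr)-\bigl((\E_Q f)^2-(\E_P f)^2\bigr)$ and using $\|f\|_\infty\le H$ yields the clean estimate $|\Var_Q(f)-\Var_P(f)|\le 3H^2\|Q-P\|_1$. Applying this with $Q=\p(\cdot|s,a,b)$, $P=P_h(\cdot|s,a,b)$, and inserting the $\ell_1$ concentration bound $\|\p(\cdot|s,a,b)-P_h(\cdot|s,a,b)\|_1\le 2\sqrt{S\iota/\N(s,a,b)}+2S\e/\N(s,a,b)$ from Lemma \ref{lem:p1}, I obtain $|\mathrm{(I)}|\le 6H^2\sqrt{S\iota/\widetilde{N}^k_h(s,a,b)}+6H^2 S\e/\N(s,a,b)$, which supplies exactly the two remaining terms. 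Summing the two estimates and setting $c_2=6$ (or the maximum of the constants that appear) finishes the proof.

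The main obstacle is not any single inequality but choosing the decomposition and the bounding route correctly. Specifically, I would route term (I) through the \emph{direct} $\ell_1$ bound $|\Var_Q(f)-\Var_P(f)|\le 3H^2\|Q-P\|_1$ rather than through the square-root comparison of Lemma \ref{lem:qiao}; the latter would produce a $\sqrt{S\e/\N}$ contribution, whereas the direct route keeps the privacy cost at the benign order $S\e/\N$ that the downstream regret analysis needs. Likewise, measuring the function mismatch under $P_h$ (term (II)) rather than under $\p$ is what preserves the sharp $4H$ coefficient; any detour through $\p$ would force an invocation of Lemma \ref{lem:lower} and inflate that constant.
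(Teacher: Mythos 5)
Your proof is correct and delivers exactly the claimed bound, but it organizes the argument differently from the paper, so a comparison is worth recording. The paper never freezes the function: it expands both variances, uses the sandwich from Lemma \ref{lem:ucblcb} to replace the pair $\left(\tfrac{\overline{V}^k_{h+1}+\underline{V}^k_{h+1}}{2},\,V^{\pi^k}_{h+1}\right)$ by the envelope $(\overline{V}^k_{h+1},\underline{V}^k_{h+1})$, and then splits the result into four cross terms: two function-mismatch terms (each bounded by $2HP_h(\overline{V}^k_{h+1}-\underline{V}^k_{h+1})$, giving the $4H$ coefficient) and two measure-mismatch terms, namely $|(\p-P_h)(\overline{V}^k_{h+1})^2|$ and $|(\p\underline{V}^k_{h+1})^2-(P_h\underline{V}^k_{h+1})^2|$, both handled by the $\ell_1$ bound of Lemma \ref{lem:p1}. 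Your two-term telescope --- $\Var_{\p}(f)-\Var_{P_h}(f)$ via the direct estimate $3H^2\|\p(\cdot|s,a,b)-P_h(\cdot|s,a,b)\|_1$, plus $\Var_{P_h}(f)-\Var_{P_h}(V^{\pi^k}_{h+1})$ via the covariance identity --- uses the same two ingredients (the Lemma \ref{lem:ucblcb} sandwich and Lemma \ref{lem:p1}) but with cleaner bookkeeping, and it reproduces the same constants ($4H$ on the leading term, $c_2=6$). Your remark that routing the measure change through Lemma \ref{lem:qiao} would degrade the privacy cost to order $\sqrt{S\e/\N(s,a,b)}$ is also accurate, and is implicitly why the paper avoids Lemma \ref{lem:qiao} in this proof as well. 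One shared caveat: like the paper, you treat $\underline{V}^k_{h+1}\le V^{\pi^k}_{h+1}\le\overline{V}^k_{h+1}$ as an immediate consequence of Lemma \ref{lem:ucblcb}; since $\pi^k$ is a \emph{correlated} policy, the intermediate chain $V^{\dagger,\nu^k}_{h+1}\ge V^{\pi^k}_{h+1}\ge V^{\mu^k,\dagger}_{h+1}$ is itself a nontrivial fact that neither you nor the paper spells out, so this is not a gap relative to the paper's own argument.
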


\begin{proof}[Proof of Lemma \ref{lem:varerror}]
  According to Lemma \ref{lem:ucblcb}, $\overline{V}_h^k(s)\geq V_h^{\pi^k}(s)\geq \underline{V}_h^k(s)$ always holds. Then it holds that 
\begin{equation}
    \begin{split}
&\left|\widetilde{V}_h^k\left(\frac{\overline{V}_{h+1}^k+\underline{V}_{h+1}^k}{2}\right)-V_h V_{h+1}^{\pi^k}\right|(s,a,b)\\\leq& \left|\p\left(\frac{\overline{V}_{h+1}^k+\underline{V}_{h+1}^k}{2}\right)^2-P_h\left(V_{h+1}^{\pi^k}\right)^2-\left[\p\left(\frac{\overline{V}_{h+1}^k+\underline{V}_{h+1}^k}{2}\right)\right]^2+\left(P_h V_{h+1}^{\pi^k}\right)^2\right|(s,a,b)\\\leq&\left|\p\left(\overline{V}_{h+1}^k\right)^2-P_h\left(\underline{V}_{h+1}^k\right)^2-\left(\p\underline{V}_{h+1}^k\right)^2+\left(P_h \overline{V}_{h+1}^{k}\right)^2\right|(s,a,b)\\\leq&
\underbrace{\left|\left(\p-P_h\right)\left(\overline{V}_{h+1}^k\right)^2\right|(s,a,b)}_{\mathrm{(i)}}+\underbrace{\left|P_h\left[\left(\overline{V}_{h+1}^k\right)^2-\left(\underline{V}_{h+1}^k\right)^2\right]\right|(s,a,b)}_{\mathrm{(ii)}}\\&+
\underbrace{\left|\left(\p\underline{V}_{h+1}^{k}\right)^2-\left(P_h\underline{V}_{h+1}^{k}\right)^2\right|(s,a,b)}_{\mathrm{(iii)}}+\underbrace{\left|\left(P_h\underline{V}_{h+1}^{k}\right)^2-\left(P_h\overline{V}_{h+1}^{k}\right)^2\right|(s,a,b)}_{\mathrm{(iv)}}.
    \end{split}
\end{equation}
The term (i) can be bounded as below due to Lemma \ref{lem:p1}.
\begin{equation}
    \mathrm{(i)}\leq 2H^2\sqrt{\frac{S\iota}{\widetilde{N}^k_h(s,a,b)}}+\frac{2H^2S\e}{\N(s,a,b)}.
\end{equation}
The term (ii) can be directly bounded as below.
\begin{equation}
    \mathrm{(ii)}\leq 2HP_h\left(\overline{V}_{h+1}^k-\underline{V}_{h+1}^k\right)(s,a,b).
\end{equation}
The term (iii) can be bounded as below due to Lemma \ref{lem:p1}.
\begin{equation}
    \mathrm{(iii)}\leq 2H\left|\left(\p-P_h\right)\underline{V}_{h+1}^{k}\right|(s,a,b)\leq 4H^2\sqrt{\frac{S\iota}{\widetilde{N}^k_h(s,a,b)}}+\frac{4H^2S\e}{\N(s,a,b)}.
\end{equation}
The term (iv) can be directly bounded as below.
\begin{equation}
    \mathrm{(iv)}\leq 2HP_h\left(\overline{V}_{h+1}^k-\underline{V}_{h+1}^k\right)(s,a,b).
\end{equation}
The conclusion holds according the upper bounds of term (i), (ii), (iii) and (iv).
\end{proof}

Finally we prove the regret bound of Algorithm \ref{alg:main}.

\begin{proof}[Proof of Theorem \ref{thm:main}]
    Our proof base on Assumption \ref{assump} and Lemma \ref{lem:concentrate}. We define the following notations.
    \begin{equation}
        \begin{cases} \Delta_h^k=\left(\overline{V}_h^k-\underline{V}_h^k\right)(s_h^k),  \\ \zeta_h^k=\Delta_h^k-\left(\overline{Q}_h^k-\underline{Q}_h^k\right)(s_h^k,a_h^k,b_h^k),\\
        \xi_h^k=P_h\left(\overline{V}_{h+1}^k-\underline{V}_{h+1}^k\right)(s_h^k,a_h^k,b_h^k)-\Delta_{h+1}^k.\end{cases}
    \end{equation}
    Then it holds that $\zeta_h^k$ and $\xi_h^k$ are martingale differences bounded by $H$. In addition, we use the following abbreviations for notational simplicity.
    \begin{equation}
        \begin{cases} \gamma_h^k=\gamma_h^k(s_h^k,a_h^k,b_h^k),  \\ \Gamma_h^k=\Gamma_h^k(s_h^k,a_h^k,b_h^k),\\
N_h^k=N_h^k(s_h^k,a_h^k,b_h^k),\\
\N=\N(s_h^k,a_h^k,b_h^k).\end{cases}
    \end{equation}
    Then we have the following analysis about $\Delta_h^k$.
    \begin{equation}
        \begin{split}
            &\Delta_h^k=\zeta_h^k+\left(\overline{Q}_h^k-\underline{Q}_h^k\right)(s_h^k,a_h^k,b_h^k)\\\leq&
            \zeta_h^k+2\gamma_h^k+2\Gamma_h^k+\p\left(\overline{V}_{h+1}^k-\underline{V}_{h+1}^k\right)(s_h^k,a_h^k,b_h^k)\\\leq&
            \zeta_h^k+2\Gamma_h^k+\left(1+\frac{2C_1}{H}\right)\cdot\left[\left(1+\frac{c_1}{H}\right)\cdot P_h\left(\overline{V}_{h+1}^k-\underline{V}_{h+1}^k\right)(s_h^k,a_h^k,b_h^k)+\frac{c_1H^2S\iota}{\N}+\frac{c_1HS\e\iota}{\N}\right]\\\leq&\zeta_h^k+\left(1+\frac{c_3}{H}\right)\cdot P_h\left(\overline{V}_{h+1}^k-\underline{V}_{h+1}^k\right)(s_h^k,a_h^k,b_h^k)+\frac{c_3H^2S\iota}{\N}+\frac{c_3HS\e\iota}{\N}\\+&c_3\underbrace{\sqrt{\frac{\widetilde{V}_h^k\left(\frac{\overline{V}^k_{h+1}+\underline{V}^k_{h+1}}{2}\right)(s_h^k,a_h^k,b_h^k)\iota}{\N}}}_{\mathrm{(i)}},           
        \end{split}
    \end{equation}
    where the first inequality holds because of the definition of $\overline{Q}$ and $\underline{Q}$. The second inequality holds due to the definition of $\gamma_h^k$ and Lemma \ref{lem:lower}. The last inequality holds for some universal constant $c_3>0$.

    The term (i) can be further bounded as below according to Lemma \ref{lem:varerror} and AM-GM inequality.
    \begin{equation}
        \begin{split}
            \mathrm{(i)}\leq& \sqrt{\frac{V_h V_{h+1}^{\pi^k}(s_h^k,a_h^k,b_h^k)\iota}{\N}}+\sqrt{\frac{4HP_h\left(\overline{V}_{h+1}^k-\underline{V}_{h+1}^k\right)(s_h^k,a_h^k,b_h^k)\iota}{\N}}+\frac{H\sqrt{c_2S\e\iota}}{\N}+c_2\sqrt{\frac{\iota}{\N}}+ \frac{H^2\iota\sqrt{c_2S}}{\N}\\\leq&
            \sqrt{\frac{V_h V_{h+1}^{\pi^k}(s_h^k,a_h^k,b_h^k)\iota}{\N}}+\frac{c_4P_h\left(\overline{V}_{h+1}^k-\underline{V}_{h+1}^k\right)(s_h^k,a_h^k,b_h^k)}{H}+\frac{c_4H^2\sqrt{S}\iota}{\N}+\frac{c_4H\sqrt{S\e\iota}}{\N}+c_4\sqrt{\frac{\iota}{\N}},
        \end{split}
    \end{equation}
    where the first inequality results from Lemma \ref{lem:varerror} and AM-GM inequality on the last term of \eqref{equ:varerror}. The second inequality holds for some universal constant $c_4>0$ according to AM-GM inequality.

    Plugging in the upper bound of term (i), for some universal constant $c_5>0$, it holds that:
    \begin{equation}
        \begin{split}
            \Delta_h^k\leq \zeta_h^k+\left(1+\frac{c_5}{H}\right)\xi_h^k+\left(1+\frac{c_5}{H}\right)\Delta_{h+1}^k+c_5\sqrt{\frac{V_h V_{h+1}^{\pi^k}(s_h^k,a_h^k,b_h^k)\iota}{\N}}+c_5\sqrt{\frac{\iota}{\N}}+\frac{c_5H^2S\iota}{\N}+\frac{c_5HS\e\iota}{\N}.
        \end{split}
    \end{equation}
Summing $\Delta_1^k$ over $k\in[K]$, we have for some universal constant $c_6>0$, it holds that:  
\begin{equation}
    \begin{split}
        \sum_{k=1}^K \Delta_1^k\leq& \underbrace{\sum_{k=1}^K\sum_{h=1}^H \left(1+\frac{c_5}{H}\right)^{h-1}\zeta_h^k}_{\mathrm{(ii)}}+\underbrace{\sum_{k=1}^K\sum_{h=1}^H \left(1+\frac{c_5}{H}\right)^{h}\xi_h^k}_{\mathrm{(iii)}}+c_6\underbrace{\sum_{k=1}^K\sum_{h=1}^H\sqrt{\frac{V_h V_{h+1}^{\pi^k}(s_h^k,a_h^k,b_h^k)\iota}{\N}}}_{\mathrm{(iv)}}\\&+c_6\underbrace{\sum_{k=1}^K\sum_{h=1}^H\sqrt{\frac{\iota}{\N}}}_{\mathrm{(v)}}+c_6\underbrace{\sum_{k=1}^K\sum_{h=1}^H\frac{H^2S\iota+HS\e\iota}{\N}}_{\mathrm{(vi)}}.
    \end{split}
\end{equation}
The term (ii) and term (iii) can be bounded by Azuma-Hoeffding inequality. With probability $1-\frac{2\beta}{9}$, it holds that
\begin{equation}
|\mathrm{(ii)}|\leq O\left(\sqrt{H^3K\iota}\right),\;\;\;\;|\mathrm{(iii)}|\leq O\left(\sqrt{H^3K\iota}\right).
\end{equation}
The main term (iv) is bounded as below.
\begin{equation}
    \begin{split}
        \mathrm{(iv)}\leq& \sum_{k=1}^K\sum_{h=1}^H\sqrt{\frac{V_h V_{h+1}^{\pi^k}(s_h^k,a_h^k,b_h^k)\iota}{N_h^k}}\\\leq&
        \sqrt{\sum_{k=1}^K\sum_{h=1}^H V_h V_{h+1}^{\pi^k}(s_h^k,a_h^k,b_h^k)\iota\cdot\sum_{k=1}^K\sum_{h=1}^H\frac{1}{N_h^k}}\\\leq&
        \sqrt{O\left(H^2K+H^3\iota\right)\iota\cdot O(HSAB\iota)}\\=&\widetilde{O}\left(\sqrt{H^3SABK}+H^2\sqrt{SAB}\right).
    \end{split}
\end{equation}
The first inequality is because $\N\geq N_h^k$ (Assumption \ref{assump}). The second inequality holds due to Cauchy-Schwarz inequality. The third inequality holds with probability $1-\frac{\beta}{9}$ because of Law of total variance and standard concentration inequalities (for details please refer to Lemma 8 of \citet{azar2017minimax}). 

The term (v) is bounded as below due to pigeon-hole principle.
\begin{equation}
    \mathrm{(v)}\leq \sum_{k=1}^K\sum_{h=1}^H\sqrt{\frac{\iota}{N_h^k}}\leq O(\sqrt{H^2SABK\iota}),
\end{equation}
where the first inequality is because $\N\geq N_h^k$ (Assumption \ref{assump}). The last one results from pigeon-hole principle.

The term (vi) can be bounded as below.
\begin{equation}
    \mathrm{(vi)}\leq \sum_{k=1}^K\sum_{h=1}^H\frac{H^2S\iota+HS\e\iota}{N_h^k}\leq O(H^3S^2AB\iota^2) + O(H^2S^2AB\e\iota^2).
\end{equation}

Combining the upper bounds for term $|\mathrm{(ii)}|$, $|\mathrm{(iii)}|$, (iv), (v) and (vi). The regret of Algorithm \ref{alg:main} can be bounded as below.
\begin{equation}
\begin{split}
    \mathrm{Regret}(K) =& \sum_{k=1}^K \left[V_1^{\dagger,\nu^k}(s_1)-V_1^{\mu^k,\dagger}(s_1)\right]\leq  \sum_{k=1}^K \left[\overline{V}_1^k(s_1)-\underline{V}_1^k(s_1)\right]\\=&\sum_{k=1}^K \Delta_1^k\leq\widetilde{O}\left(\sqrt{H^2SABT}+H^3S^2AB+H^2S^2AB\e\right),
\end{split}
\end{equation}
where $T=HK$ is the number of steps. 

The failure probability is bounded by $\beta$ ($\frac{\beta}{3}$ for Assumption \ref{assump}, $\frac{\beta}{3}$ for Lemma \ref{lem:concentrate}, $\frac{\beta}{3}$ for terms (ii), (iii) and (iv)). The proof of Theorem \ref{thm:main} is complete.
\end{proof}

\subsection{Proof of Theorem \ref{thm:pac}}\label{sec:pac}
In this part, we provide a proof of the PAC guarantee: Theorem \ref{thm:pac}. The proof directly follows from the proof of the regret bound (Theorem \ref{thm:main}).

\begin{proof}[Proof of Theorem \ref{thm:pac}]
    Recall that we choose $\pi^{\text{out}}=\pi^{\overline{k}}$ such that $\overline{k}=\mathrm{argmin}_{k}\left(\overline{V}_1^k-\underline{V}_1^k\right)(s_1)$. Therefore, we have
    \begin{equation}
        V_1^{\dagger,\nu^{\mathrm{out}}}(s_1)-V_1^{\mu^{\mathrm{out}},\dagger}(s_1) \leq \overline{V}_1^{\overline{k}}(s_1)-\underline{V}_1^{\overline{k}}(s_1)\leq \frac{1}{K}\widetilde{O}\left(\sqrt{H^3SABK}+H^2S^2AB\e\right),
    \end{equation}
    if ignoring the lower order term of the regret bound.

    Therefore, choosing $K\geq\widetilde{\Omega}\left(\frac{H^3SAB}{\alpha^2}+\min\left\{K^\prime|\frac{H^2S^2AB\e}{K^\prime}\leq\alpha\right\}\right)$ bounds the R.H.S by $\alpha$.
\end{proof}

\section{Missing proof in Section \ref{sec:private}}\label{sec:proofprivate}

In this section, we provide the missing proof for results in Section \ref{sec:private}. Recall that $N^k_h$ is the real visitation count, $\widehat{N}^k_h$ is the intermediate noisy count calculated by both Privatizers and $\widetilde{N}^k_h$ is the final private count after the post-processing step. Note that most of the proof here are generalizations of Appendix D in \citet{qiao2023near} to the multi-player setting, and here we state the proof for completeness.

\begin{proof}[Proof of Lemma \ref{lem:central}]
Due to Theorem 3.5 of \citet{chan2011private} and Lemma 34 of \citet{hsu2014private}, the release of $\{\widehat{N}^k_h(s,a,b)\}_{(h,s,a,b,k)}$ satisfies $\frac{\epsilon}{2}$-DP. Similarly, the release of $\{\widehat{N}^k_h(s,a,b,s^\prime)\}_{(h,s,a,b,s^\prime,k)}$ also satisfies $\frac{\epsilon}{2}$-DP. Therefore, the release of the following private counters $\{\widehat{N}^k_h(s,a,b)\}_{(h,s,a,b,k)}$, $\{\widehat{N}^k_h(s,a,b,s^\prime)\}_{(h,s,a,b,s^\prime,k)}$ satisfy $\epsilon$-DP. Due to post-processing (Lemma 2.3 of \citet{bun2016concentrated}), the release of both private counts $\{\widetilde{N}^k_h(s,a,b)\}_{(h,s,a,b,k)}$ and $\{\widetilde{N}^k_h(s,a,b,s^\prime)\}_{(h,s,a,b,s^\prime,k)}$ also satisfies $\epsilon$-DP. Then it holds that the release of all $\pi^k$ is $\epsilon$-DP according to post-processing. Finally, the guarantee of $\epsilon$-JDP results from Billboard Lemma (Lemma 9 of \citet{hsu2014private}).

For utility analysis, because of Theorem 3.6 of \citet{chan2011private}, our choice $\epsilon^\prime=\frac{\epsilon}{2H\log K}$ in Binary Mechanism and a union bound, with probability $1-\frac{\beta}{3}$, for all $(h,s,a,b,s^\prime,k)$,
\begin{equation}
\begin{split}
\left|\widehat{N}^k_h(s,a,b,s^\prime)-N^k_h(s,a,b,s^\prime)\right|\leq O\left(\frac{H}{\epsilon}\log(HSABK/\beta)^2\right),\\
\left|\widehat{N}^k_h(s,a,b)-N^k_h(s,a,b)\right|\leq O\left(\frac{H}{\epsilon}\log(HSABK/\beta)^2\right).
\end{split}
\end{equation}
Together with Lemma \ref{lem:middle}, the Central Privatizer satisfies Assumption \ref{assump} with $\e=\widetilde{O}\left(\frac{H}{\epsilon}\right)$.
\end{proof}

\begin{proof}[Proof of Theorem \ref{thm:central}]
The proof directly results from plugging  $\e=\widetilde{O}\left(\frac{H}{\epsilon}\right)$ into Theorem \ref{thm:main} and Theorem \ref{thm:pac}.
\end{proof}

\begin{proof}[Proof of Theorem \ref{thm:lowerjdp}]
    The first term results from the non-private regret lower bound $\Omega(\sqrt{H^2S(A+B)T})$ \citep{bai2020provable}. The second term is a direct adaptation of the $\Omega(HSA/\epsilon)$ lower bound for any algorithms with $\epsilon$-JDP guarantee under single-agent MDP \citep{vietri2020private}. 
\end{proof}

\begin{proof}[Proof of Lemma \ref{lem:local}]
The privacy guarantee directly results from properties of Laplace Mechanism and composition of DP \citep{dwork2014algorithmic}.

For utility analysis, because of Corollary 12.4 of \citet{dwork2014algorithmic} and a union bound, with probability $1-\frac{\beta}{3}$, for all possible $(h,s,a,b,s^\prime,k)$,
\begin{equation}
\begin{split}
\left|\widehat{N}^k_h(s,a,b,s^\prime)-N^k_h(s,a,b,s^\prime)\right|\leq O\left(\frac{H}{\epsilon}\sqrt{K\log(HSABK/\beta)}\right),\\
\left|\widehat{N}^k_h(s,a,b)-N^k_h(s,a,b)\right|\leq O\left(\frac{H}{\epsilon}\sqrt{K\log(HSABK/\beta)}\right).
\end{split}
\end{equation}
Together with Lemma \ref{lem:middle}, the Local Privatizer satisfies Assumption \ref{assump} with $\e=\widetilde{O}\left(\frac{H}{\epsilon}\sqrt{K}\right)$.
\end{proof}

\begin{proof}[Proof of Theorem \ref{thm:local}]
The proof directly results from plugging  $\e=\widetilde{O}\left(\frac{H}{\epsilon}\sqrt{K}\right)$ into Theorem \ref{thm:main} and Theorem \ref{thm:pac}.
\end{proof}

\begin{proof}[Proof of Theorem \ref{thm:lowerldp}]
    The first term results from the non-private regret lower bound $\Omega(\sqrt{H^2S(A+B)T})$ \citep{bai2020provable}. The second term is a direct adaptation of the $\Omega(\sqrt{HSAT}/\epsilon)$ lower bound for any algorithms with $\epsilon$-LDP guarantee under single-agent MDP \citep{garcelon2021local}. 
\end{proof}

\begin{proof}[Proof of Lemma \ref{lem:middle}]
For clarity, we denote the solution of \eqref{eqn:final_choice} by $\bar{N}^k_h$ and therefore $\widetilde{N}^k_h(s,a,b,s^\prime)=\bar{N}^k_h(s,a,b,s^\prime)+\frac{\e}{2S}$, $\widetilde{N}^k_h(s,a,b)=\bar{N}^k_h(s,a,b)+\frac{\e}{2}$.

When the condition (two inequalities) in Lemma \ref{lem:middle} holds, the original counts $\{N^k_h(s,a,b,s^\prime)\}_{s^{\prime}\in\mathcal{S}}$ is a feasible solution to the optimization problem, which means that
$$\max_{s^{\prime}}\left|\bar{N}^k_h(s,a,b,s^{\prime})-\widehat{N}^k_h(s,a,b,s^{\prime})\right|\leq \max_{s^{\prime}}\left|N^k_h(s,a,b,s^{\prime})-\widehat{N}^k_h(s,a,b,s^{\prime})\right|\leq \frac{\e}{4}.$$
Combining with the condition in Lemma \ref{lem:middle} with respect to $\widehat{N}^k_h(s,a,b,s^\prime)$, it holds that
$$\left|\bar{N}^k_h(s,a,b,s^{\prime})-N^k_h(s,a,b,s^{\prime})\right|\leq \left|\bar{N}^k_h(s,a,b,s^{\prime})-\widehat{N}^k_h(s,a,b,s^{\prime})\right|+\left|\widehat{N}^k_h(s,a,b,s^{\prime})-N^k_h(s,a,b,s^{\prime})\right|\leq \frac{\e}{2}.$$
Since $\widetilde{N}^k_h(s,a,b,s^\prime)=\bar{N}^k_h(s,a,b,s^\prime)+\frac{\e}{2S}$ and $\bar{N}^k_h(s,a,b,s^\prime)\geq 0$, we have
\begin{equation}\label{eqn:1}
\widetilde{N}^k_h(s,a,b,s^\prime)>0,\;\;\; \left|\widetilde{N}^k_h(s,a,b,s^{\prime})-N^k_h(s,a,b,s^{\prime})\right|\leq\e.
\end{equation}

For $\bar{N}^k_h(s,a,b)$, according to the constraints in the optimization problem \eqref{eqn:final_choice}, it holds that 
$$\left|\bar{N}^k_h(s,a,b)-\widehat{N}^k_h(s,a,b)\right|\leq\frac{\e}{4}.$$
Combining with the condition in Lemma \ref{lem:middle} with respect to $\widehat{N}^k_h(s,a,b)$, it holds that
$$\left|\bar{N}^k_h(s,a,b)-N^k_h(s,a,b)\right|\leq \left|\bar{N}^k_h(s,a,b)-\widehat{N}^k_h(s,a,b)\right|+\left|\widehat{N}^k_h(s,a,b)-N^k_h(s,a,b)\right|\leq \frac{\e}{2}.$$
Since $\widetilde{N}^k_h(s,a,b)=\bar{N}^k_h(s,a,b)+\frac{\e}{2}$, we have 
\begin{equation}\label{eqn:2}
N^k_h(s,a,b)\leq\widetilde{N}^k_h(s,a,b)\leq N^k_h(s,a,b)+\e.
\end{equation}

According to the last line of the optimization problem \eqref{eqn:final_choice}, we have $\bar{N}^k_h(s,a,b)=\sum_{s^\prime\in\mathcal{S}}\bar{N}^k_h(s,a,b,s^\prime)$ and therefore,
\begin{equation}\label{eqn:3}
\widetilde{N}^k_h(s,a,b)=\sum_{s^\prime\in\mathcal{S}}\widetilde{N}^k_h(s,a,b,s^\prime).
\end{equation}

The proof is complete by combining \eqref{eqn:1}, \eqref{eqn:2} and \eqref{eqn:3}.
\end{proof}
\end{document}